\documentclass[journal,transmag]{IEEEtran}

\usepackage{times}
\usepackage{epsfig}
\usepackage{graphicx}
\usepackage{amsmath,mathrsfs}
\usepackage{amssymb}
\usepackage{enumerate}
\usepackage{threeparttable}
\usepackage{multirow}
\usepackage{booktabs}
\usepackage{algorithm}
\usepackage{algorithmic}
\usepackage{amsmath,bm,amsfonts}
\usepackage{mcode}
\usepackage{subcaption}
\usepackage{amsthm}
\usepackage{url}
\usepackage{caption,color}
\usepackage{threeparttable}





\newcommand{\sgnop}{\operatorname{sgn}}
\newcommand{\sgn}[1]{\ensuremath{\sgnop\left(#1\right)}}



\newcommand{\inproduct}[2]{\left\langle#1,#2\right\rangle}




\newtheorem{thm}{Theorem}

\newtheorem{prop}{Proposition}
\newtheorem{defn}{Definition}
%





\newcommand{\Tr}{\mcode{Tr}}

\newcommand{\unfold}{\mcode{unfold}}
\newcommand{\fold}{\mcode{fold}}

\newcommand{\st}{{\text{s.t.}}}

\newcommand{\rankm}{\text{rank}}
\newcommand{\rankt}{\text{rank}_{\text{t}}}

\newcommand{\ranka}{\text{rank}_{\text{a}}}

\newcommand{\none}{n_{(1)}}
\newcommand{\ntwo}{n_{(2)}}
\newcommand{\Rn}{\mathbb{R}^{n_1\times n_2\times n_3}}

\newcommand{\nss}{n\times n\times n_3}

\newcommand{\Pomega}{\bm{\mathcal{P}}_{\bm{{\Omega}}}}

\newcommand{\Omegat}{\bm{{\Omega}}}

\newcommand{\bcirc}{\mcode{bcirc}}
\newcommand{\bdiag}{\mcode{bdiag}}

\newcommand{\A}{\bm{\mathcal{A}}}
\newcommand{\B}{\bm{\mathcal{B}}}
\newcommand{\C}{\bm{\mathcal{C}}}
\newcommand{\D}{\bm{\mathcal{D}}}
\newcommand{\E}{\bm{\mathcal{E}}}

\newcommand{\I}{\bm{\mathcal{I}}}
\newcommand{\LL}{\bm{\mathcal{L}}}
\newcommand{\M}{\bm{\mathcal{M}}}
\newcommand{\PP}{\bm{\mathcal{P}}}
\newcommand{\Q}{\bm{\mathcal{Q}}}

\newcommand{\SSS}{\bm{\mathcal{S}}}

\newcommand{\U}{\bm{\mathcal{U}}}
\newcommand{\V}{\bm{\mathcal{V}}}

\newcommand{\X}{\bm{\mathcal{X}}}
\newcommand{\Y}{\bm{\mathcal{Y}}}

\newcommand{\Lhat}{\hat{\bm{\mathcal{L}}}}
\newcommand{\Shat}{\hat{\bm{\mathcal{S}}}}


\newcommand{\Abar}{\bm{\mathcal{\bar{A}}}}
\newcommand{\Bbar}{\bm{\mathcal{\bar{B}}}}
\newcommand{\Cbar}{\bm{\mathcal{\bar{C}}}}

\newcommand{\Ubar}{\bm{\mathcal{\bar{U}}}}
\newcommand{\Vbar}{\bm{\mathcal{\bar{V}}}}
\newcommand{\Sbar}{\bm{\mathcal{\bar{S}}}}

\newcommand{\Ibar}{\bm{\mathcal{\bar{I}}}}

\newcommand{\Am}{\bm{{A}}}
\newcommand{\Bm}{\bm{{B}}}

\newcommand{\Dm}{\bm{{D}}}
\newcommand{\Em}{\bm{E}}
\newcommand{\F}{\bm{{F}}}

\newcommand{\Lm}{\bm{L}}

\newcommand{\Xm}{\bm{X}}

\renewcommand{\Im}{\bm{I}}

\newcommand{\Ambar}{\bm{\bar{A}}}
\newcommand{\Bmbar}{\bm{\bar{{B}}}}
\newcommand{\Cmbar}{\bm{\bar{C}}}

\newcommand{\Umbar}{\bm{\bar{U}}}
\newcommand{\Vmbar}{\bm{\bar{V}}}
\newcommand{\Xmbar}{\bm{\bar{X}}}

\newcommand{\Ymbar}{\bm{\bar{Y}}}

\newcommand{\Smbar}{\bm{\bar{S}}}

\newcommand{\Imbar}{\bm{\bar{I}}}

\newcommand{\Bmtilde}{\bm{\tilde{{B}}}}






\renewcommand{\aa}{{\bm{a}}}
\newcommand{\e}{{\text{e}}}
\newcommand{\vv}{{\bm{v}}}

\newcommand{\norm}[1]{\lVert#1\rVert}





%
%
%
{%
\begin{list}{#1}{
\vspace{-\topsep}
\vspace{-\partopsep}
\setlength{\itemindent}{0cm}
\setlength{\rightmargin}{0cm}
\setlength{\listparindent}{0cm}
\settowidth{\labelwidth}{#1}
\setlength{\leftmargin}{\labelwidth}
\addtolength{\leftmargin}{\labelsep}
\setlength{\itemsep}{0cm}
}%
}%
{%
\end{list}
\vspace{-\topsep}
\vspace{-\partopsep}
}

%
%
{\begin{enumerate}%
}%
{\end{enumerate}}

%

%

\hyphenation{elec-tro-pa-la-tog-ra-phy}

\hyphenation{acad-e-my acad-e-mies af-ter-thought anom-aly anom-alies
an-ti-deriv-a-tive an-tin-o-my an-tin-o-mies apoth-e-o-ses
apoth-e-o-sis ap-pen-dix ar-che-typ-al as-sign-a-ble as-sist-ant-ship
as-ymp-tot-ic asyn-chro-nous at-trib-uted at-trib-ut-able bank-rupt
bank-rupt-cy bi-dif-fer-en-tial blue-print busier busiest
cat-a-stroph-ic cat-a-stroph-i-cally con-gress cross-hatched data-base
de-fin-i-tive de-riv-a-tive dis-trib-ute dri-ver dri-vers eco-nom-ics
econ-o-mist elit-ist equi-vari-ant ex-quis-ite ex-tra-or-di-nary
flow-chart for-mi-da-ble forth-right friv-o-lous ge-o-des-ic
ge-o-det-ic geo-met-ric griev-ance griev-ous griev-ous-ly
hexa-dec-i-mal ho-lo-no-my ho-mo-thetic ideals idio-syn-crasy
in-fin-ite-ly in-fin-i-tes-i-mal ir-rev-o-ca-ble key-stroke
lam-en-ta-ble light-weight mal-a-prop-ism man-u-script mar-gin-al
meta-bol-ic me-tab-o-lism meta-lan-guage me-trop-o-lis
met-ro-pol-i-tan mi-nut-est mol-e-cule mono-chrome mono-pole
mo-nop-oly mono-spline mo-not-o-nous mul-ti-fac-eted mul-ti-plic-able
non-euclid-ean non-iso-mor-phic non-smooth par-a-digm par-a-bol-ic
pa-rab-o-loid pa-ram-e-trize para-mount pen-ta-gon phe-nom-e-non
post-script pre-am-ble pro-ce-dur-al pro-hib-i-tive pro-hib-i-tive-ly
pseu-do-dif-fer-en-tial pseu-do-fi-nite pseu-do-nym qua-drat-ic
quad-ra-ture qua-si-smooth qua-si-sta-tion-ary qua-si-tri-an-gu-lar
quin-tes-sence quin-tes-sen-tial re-arrange-ment rec-tan-gle
ret-ri-bu-tion retro-fit retro-fit-ted right-eous right-eous-ness
ro-bot ro-bot-ics sched-ul-ing se-mes-ter semi-def-i-nite
semi-ho-mo-thet-ic set-up se-vere-ly side-step sov-er-eign spe-cious
spher-oid spher-oid-al star-tling star-tling-ly sta-tis-tics
sto-chas-tic straight-est strange-ness strat-a-gem strong-hold
sum-ma-ble symp-to-matic syn-chro-nous topo-graph-i-cal tra-vers-a-ble
tra-ver-sal tra-ver-sals treach-ery turn-around un-at-tached
un-err-ing-ly white-space wide-spread wing-spread wretch-ed
wretch-ed-ly Eng-lish Euler-ian Feb-ru-ary Gauss-ian
Hamil-ton-ian Her-mit-ian Jan-u-ary Japan-ese Kor-te-weg
Le-gendre Mar-kov-ian Noe-ther-ian No-vem-ber Rie-mann-ian Sep-tem-ber}


\newcommand{\sumi}{\sum_{i=1}^{n_3}}

 \begin{document}

\title{Exact Recovery of Tensor Robust Principal Component Analysis under Linear Transforms}
	
\author{Canyi Lu~and~Pan Zhou
\thanks{C. Lu is with the Department of Electrical and Computer Engineering, Carnegie Mellon University (e-mail: canyilu@gmail.com).}
\thanks{P. Zhou is with the Department of Electrical and Computer Engineering, National University of Singapore, Singapore (e-mail: panzhou3@gmail.com).}
}

\maketitle
\vspace{-100em}
\begin{abstract}

This work studies  the Tensor Robust Principal Component Analysis (TRPCA) problem, which aims to exactly recover the low-rank and sparse components from their sum. Our model is motivated by the recently proposed  linear transforms based tensor-tensor product and tensor SVD. We define a new transforms depended tensor rank and the corresponding tensor nuclear norm. Then we solve the TRPCA problem by convex optimization whose objective is a weighted combination of the new tensor nuclear norm and the $\ell_1$-norm. In theory, we show that under certain incoherence conditions, the convex program exactly recovers the underlying low-rank and sparse components. It is of great interest that our new TRPCA model generalizes existing works. In particular, if the studied tensor reduces to a matrix, our TRPCA model reduces to the known matrix RPCA \cite{RPCA}. Our new TRPCA which is allowed to use general linear transforms can be regarded as an extension of our former TRPCA work~\cite{lu2016tensorrpca} which uses the  discrete Fourier transform. But their proof of the recovery guarantee is different. Numerical experiments verify our results and the application on image recovery demonstrates the superiority of our method.
\end{abstract}

\section{Introduction}

Tensors are the higher-order generalization of vectors and matrices. They have many
applications in the physical, imaging and information sciences, and an in depth
survey can be found in \cite{kolda2009tensor}. Tensor decompositions give a concise representation
of the underlying structure of the tensor, revealing when the tensor-data can be
modeled as lying close to a low-dimensional subspace. Having originated in the fields of psychometrics and
chemometrics, these decompositions are now widely used in other application areas
such as computer vision \cite{Vasilescu2002Multilinear}, web data mining \cite{franz2009triplerank}, and signal processing \cite{sidiropoulos2000parallel}.

Tensor decomposition faces several challenges: arbitrary outliers, missing
data/partial observations, and computational efficiency. Tensor decomposition resembles principal component analysis (PCA) for matrices in many ways. 
The two commonly used decompositions are the CANDECOMP/PARAFAC (CP) and Tucker decomposition \cite{kolda2009tensor}. It is well known that PCA is sensitive to outliers and gross corruptions (non-Gaussian noise). 
Since the CP and Tucker decompositions are also based on least-squares approximation, they are prone to these problems as well. Algorithms based on nonconvex
formulations have been proposed to robustify tensor decompositions against outliers \cite{engelen2009a} and missing data~\cite{Acar09scalabletensor}. However, they suffer from the lack of global optimality guarantees and statistical guarantee.

In this work, we study the Tensor Robust Principal Component Analysis (TRPCA) problem which aims to find the low-rank and sparse tensors decomposition from their sum. More specifically, assume that a  tensor $\X$ can be decomposed as $\X=\LL_0+\E_0$ where $\LL_0$ is a low-rank tensor and $\E_0$ is a sparse tensor. TRPCA aims to recover $\LL_0$ and $\E_0$ from $\X$. We focus on the convex model which can be solved exactly and efficiently, and the solutions own the theoretical guarantee. TRPCA extends the well known Robust PCA \cite{RPCA} model, i.e.,
\begin{equation}\label{rpca}
\min_{\Lm,\Em} \ \norm{\Lm}_*+\lambda\norm{\Em}_1, \ \st  \ \Xm=\Lm+\Em,
\end{equation}
where $\norm{\Lm}_*$ denotes the matrix nuclear norm, $\norm{\Em}_1$ denotes the $\ell_1$-norm and $\lambda>0$. Under certain incoherence conditions, it is proved in \cite{RPCA} that the solution to (\ref{rpca}) exactly recovers the underlying low-rank and sparse components from their sum. RPCA has many applications in image and video analysis~\cite{ji2011robust,peng2012rasl}.

It is natural to consider the tensor extension of RPCA. Unlike the matrix cases, the recovery theory for low-rank tensor estimation problems is far from being well established. The main issues lie in the definitions of tensor rank. There have different formulations of tensor SVD decompositions which associated with different tensor rank definitions. Then they lead to different tensor RPCA model. Compared with the matrix RPCA, the tensor RPCA models have several limitations.
The tensor CP rank is defined as the smallest number of rank one tensor CP decomposition. However, it is NP-hard to compute. Moreover, finding the best convex relaxation of the tensor CP rank is also NP-hard~\cite{hillar2013most}. Unlike the matrix case, where the convex relaxation of the rank, viz., the nuclear norm, can be computed efficiently. These issues make the low CP rank tensor recovery problem challenging to solve. The tensor Tucker rank is more widely studied as it is tractable. The sum-of-nuclear-norms (SNN) is served as a simple convex surrogate for tensor Tucker rank. This idea, after first being proposed in \cite{liu2013tensor}, has been studied in \cite{gandy2011tensor,tomioka2011statistical}, and successfully
applied to various problems \cite{kreimer2013nuclear,signoretto2011tensor}. Unlike the matrix cases, the recovery theory for low Tucker rank tensor estimation problems is far from being well established. The work \cite{tomioka2011statistical} conducts a  statistical analysis of tensor decomposition and provides the first theoretical guarantee for SNN minimization. This result has been further enhanced in recent works~\cite{mu2013square, huang2014provable}. The main limitation of SNN minimization is that it is suboptimal, e.g., for tensor completion, the required sample complexity is much higher than the degrees of freedom. This is different from the matrix nuclear norm minimization which leads to order optimal sampling complexity \cite{candes2009exact}. Intuitively, the limitation of SNN lies in the issue that SNN is not a tight convex relaxation of tensor Tucker rank \cite{romera2013new}.

More recently, based on the tensor-tensor product (t-product) and tensor SVD (t-SVD) \cite{kilmer2011factorization}, a new tensor tubal rank and Tensor Nuclear Norm (TNN) is proposed and applied to tensor completion \cite{zhang2014novel,semerci2014tensor,lu2018exact} and tensor robust PCA~\cite{lu2016tensorrpca,lu2018tensor}. Compared with the Tucker rank based SNN model, the main advantage of the t-product induced TNN based models is that they own the same tight recovery bound as the matrix cases~\cite{chen2013incoherence,zhang2015exact,lu2018tensor,lu2016tensorrpca}. Also, unlike the CP rank, the tubal rank and TNN are computable.  In \cite{kernfeld2015tensor}, the authors observe that the t-product is based on a convolution-like operation, which can be implemented using the Discrete Fourier Transform (DFT). In order to properly motivate this transform based approach, a more general tensor-tensor product definition is proposed based on any invertible linear transforms. The transforms based t-product also owns a matrix-algebra based interpretation in the spirit of \cite{gleich2013power}. Such a new transforms based t-product is of great interest in practice as it allows to use different linear transforms for different real data.

In this work, we propose a more general tensor RPCA model based on a new tensor rank and tensor nuclear norm induced by invertible linear transforms, and provide the theoretical recovery guarantee. We show that when the invertible linear transform given by the matrix $\Lm$ further satisfies
\begin{align*}
\Lm^\top \Lm = \Lm\Lm^\top = \ell  \Im,
\end{align*}
for some constant $\ell>0$, a new  tensor tubal rank and tensor nuclear norm can be defined induced by the transform based t-product. Equipped with the new tensor
nuclear norm, we then solve the TRPCA problem by solving a convex program (see the definitions of the notations in Section \ref{sec_notations})
\begin{align}\label{trpca}
\min_{\LL,\E} \ \norm{\LL}_*+\lambda\norm{\SSS}_1, \ \st \ \X=\LL+\SSS.
\end{align}
In theory, we prove that, under certain incoherence conditions, the solution to (\ref{trpca}) exactly recovers the underlying low-rank $\LL_0$ and sparse $\SSS_0$ components with high probability. Interestingly, our model is much more flexible and the recovery result is much more general than existing works~\cite{RPCA,lu2016tensorrpca}, since we have much more general choices of the invertible linear transforms. If the tensors reduce to matrices, our model and main result reduce to the RPCA cases in~\cite{RPCA}. If the discrete Fourier transform is used as the linear transform $\Lm$, the t-product, tensor nuclear norm and our TRPCA model reduce to the cases in our former work~\cite{lu2016tensorrpca,lu2018tensor}. However, our extension on more general choices of linear transforms is non-trivial, since the proofs of the exact recovery guarantee is very different due to different property of the linear transforms. We will give detailed discussions in Section \ref{sec_rpca}.


The rest of this paper is structured as follows. Section~\ref{sec_notations} gives some notations and presents the new tensor nuclear norm induced by the t-product under linear transforms. Section \ref{sec_rpca} presents the theoretical guarantee of the new tensor nuclear norm based convex TRPCA model.  Numerical experiments conducted on both synthesis and real world data are presented in Section \ref{sec_exp}. We finally conclude this work in Section \ref{sec_con}.

\section{Tensor Tubal Rank and Tensor Nuclear Norm under Linear Transform}\label{sec_notations}

\subsection{Notations}
We first introduce some notations and definitions used in this paper.  We follow similar notations as in \cite{lu2018tensor}.
We denote scalars by lowercase letters, {e.g.}, $a$, vector by boldface lowercase letters, {e.g.}, $\aa$, matrices by boldface capital letters, {e.g.}, $\Am$, and tensors by boldface Euler script letters, {e.g.}, $\A$.  {For a 3-way tensor $\A\in\mathbb{R}^{n_1\times n_2\times n_3}$, we denote its $(i,j,k)$-th entry as $\A_{ijk}$ or $a_{ijk}$ and use the Matlab notation $\A(i,:,:)$, $\A(:,i,:)$ and $\A(:,:,i)$ to denote respectively the $i$-th horizontal, lateral and frontal slice \cite{kolda2009tensor}.} More often, the frontal slice $\A(:,:,i)$ is denoted compactly as $\Am^{(i)}$. The tube is denoted as $\A(i,j,:)$. The inner product between $\Am$ and $\Bm$ in $\mathbb{R}^{n_1\times n_2}$ is defined as $\inproduct{\Am}{\Bm}=\Tr(\Am^\top\Bm)$, where $\Am^\top$ denotes the transpose of $\Am$ and $\Tr(\cdot)$ denotes the matrix trace. The inner product between $\A$ and $\B$ in $\Rn$ is defined as $\langle\A,\B\rangle=\sum_{i=1}^{n_3}\inproduct{\Am^{(i)}}{\Bm^{(i)}}$. We denote $\bm I_n$ as the $n\times n$ sized identity matrix.



Some norms of vector, matrix and tensor are used. We denote the Frobenius norm as $\norm{\A}_F=\sqrt{\sum_{ijk}a_{ijk}^2}$, the $\ell_1$-norm as $\norm{\A}_1=\sum_{ijk}|a_{ijk}|$, and the infinity norm as $\norm{\A}_\infty=\max_{ijk}|a_{ijk}|$, respectively. The above norms reduce to the vector or matrix norms if $\A$ is a vector or a matrix. For $\vv\in\mathbb{R}^n$, the $\ell_2$-norm is $\norm{\vv}_2 = \sqrt{\sum_{i}v_{i}^2}$. The spectral norm of a matrix $\Am$ is denoted as $\norm{\Am} = \max_{i}\sigma_i(\Am)$, where $\sigma_i(\Am)$'s are the singular values of $\Am$. The matrix nuclear norm is $\norm{\Am}_* = \sum_{i}\sigma_i(\Am)$.

\subsection{T-product Induced Tensor Nuclear Norm under Linear Transform}
We first give some notatons, concepts and properties about the tensor-tensor product proposed in \cite{kilmer2011factorization}.
For $\A\in\Rn$, we define the block circulant matrix ${\bcirc}(\A)\in\mathbb{R}^{n_1n_3\times n_2n_3}$ of $\A$ as
\begin{align*} 
{\bcirc}(\A) =
\begin{bmatrix}
\Am^{(1)} &\Am^{(n_3)} &\cdots &\Am^{(2)} \\
\Am^{(2)} &\Am^{(1)} & \cdots &\Am^{(3)} \\
\vdots & \vdots & \ddots & \vdots \\
\Am^{(n_3)} & \Am^{(n_3-1)} & \cdots & \Am^{(1)}
\end{bmatrix}.
\end{align*}
We define the following two operators
\begin{equation*}
\mcode{unfold}(\A) =
\begin{bmatrix}
\Am^{(1)} \\ \Am^{(2)} \\ \vdots \\ \Am^{(n_3)} 
\end{bmatrix}, \ \mcode{fold}(\mcode{unfold}(\A))=\A,
\end{equation*}
where the $\unfold$ operator maps $\A$ to a matrix of size $n_1n_3\times n_2$ and $\fold$ is its inverse operator. Let $\A\in\Rn$ and $\B\in\mathbb{R}^{n_2\times l\times n_3}$. Then the t-product $\C = \A*\B$ is defined to be a tensor of size $n_1\times l\times n_3$, 
\begin{equation}\label{tproducdef}
\C = \A*\B = \mcode{fold}(\mcode{bcirc}(\A)\cdot\mcode{unfold}(\B)).
\end{equation} 
The block circulant matrix can be block diagonalized using Discrete Fourier Transform (DFT) matrix $\F_{n_3}$, i.e.,
\begin{align}\label{dftpro}
(\F_{n_3} \otimes \bm{I}_{n_1}) \cdot \mcode{bcirc}(\A) \cdot (\F_{n_3}^{-1} \otimes \bm{I}_{n_2}) = \Ambar,
\end{align}
where $\otimes$ denotes the Kronecker product, and $\Ambar$ is a block diagonal matrix with the $i$-th block $\Ambar^{(i)}$ being the $i$-th frontal slices of $\Abar$ obtained by performing DFT on $\A$ along the 3-rd dimension, i.e., 
\begin{align}\label{defLfft}
\Abar =  \A \times_3 \F_{n_3},
\end{align}
where $\times_3$ denotes the mode-3 product (see Definition 2.5 in~\cite{kernfeld2015tensor}), and $\F_{n_3}\in\mathbb{C}^{n_3\times n_3}$ denotes the DFT matrix (see \cite{lu2018tensor} for the formulation). By using the Matlab command $\mcode{fft}$, we also have $\Abar=\mcode{fft}(\A,[\ ],3)$.  We denote 
\begin{align}
\D = \A \odot \B
\end{align}
as the frontal-slice-wise product (Definition 2.1 in \cite{kernfeld2015tensor}), i.e.,
\begin{align}
\Dm^{(i)} = \Am^{(i)} \Bm^{(i)}, \ i=1,\cdots,n_3.
\end{align} 
Then the block diagonalized property in (\ref{dftpro}) implies that $\Cbar = \Abar\odot \Bbar$. So the t-product in (\ref{tproducdef}) is equivalent to the matrix-matrix product in the transform domain using DFT. 

Instead of using the specific discrete Fourier transform, the recent work \cite{kernfeld2015tensor} proposes a more general definition of t-product under any invertible linear transform $L$. In this work, we consider the linear transform 
$L: \mathbb{R}^{n_1\times n_2\times n_3}\rightarrow \mathbb{R}^{n_1\times n_2\times n_3}$ which gives $\Abar$ by performing a linear transform on $\A$ along the 3-rd dimension, i.e.,
\begin{align}\label{defL}
\Abar = L(\A) = \A \times_3 \Lm,
\end{align}
where the linear transform is given by  $\Lm\in\mathbb{R}^{n_3\times n_3}$ which can be arbitrary invertible matrix. 
We also have the inverse mapping
given by
\begin{align}
L^{-1}(\A) = \A\times_3 \Lm^{-1}.
\end{align}
Then the  t-product under linear transform $L$ is defined as follows.
\begin{defn} \textbf{(T-product)} \cite{kernfeld2015tensor}
	Let	$L$ be any invertible linear transform in (\ref{defL}), $\A\in\Rn$ and $\B\in\mathbb{R}^{n_2\times l\times n_3}$.  The  t-product of $\A$ and $\B$ under $L$, denoted as $\C = \A*_L\B$, is defined such that $L(\C) = L(\A) \odot L(\B)$.
\end{defn}

We denote $\Ambar\in\mathbb{R}^{n_1n_3\times n_2n_3}$ as a block diagonal matrix with its $i$-th block on the diagonal as the $i$-th frontal slice $\Ambar^{(i)}$ of ${\Abar} = L(\A)$, {i.e.},
\begin{equation*}\label{eq_Abardef}
\Ambar = \bdiag(\Abar) =
\begin{bmatrix}
\Ambar^{(1)} & & & \\
& \Ambar^{(2)} & & \\
& & \ddots & \\
& & & \Ambar^{(n_3)}
\end{bmatrix},
\end{equation*}
where $\bdiag$ is an operator which maps tensor $\Abar$ to the block diagonal matrix $\Ambar$. Then $L(\C) = L(\A) \odot L(\B)$ is equivalent to $\Cmbar = \Ambar \Bmbar$. This implies that the t-product under $L$ is equivalent to the matrix-matrix product in the transform domain. By using this property, Algorithm \ref{alg_ttprod} gives the way for computing t-product. Figure \ref{fig_transtproduct} gives an intuitive illustration of t-product and its equivalence in the transform domain. It is easy to see that the time complexity for computing $L(\A)$ is $O(n_1n_2n_3^2)$. Then the time complexity for computing  $\A*_L\B$ is $O(n_1n_2n_3^2+n_2ln_3^2+n_1n_2n_3l)$.  But the cost can be lower  if $L$ has special properties. For example, the cost for computing $L(\A)$ is $O(n_1n_2n_3\log n_3)$ for fast Fourier transform \cite{lu2018tensor}.



\begin{algorithm}[!t]
	\caption{T-product under linear transform $L$}
	\textbf{Input:} $\A\in\Rn$, $\B\in\mathbb{R}^{n_2\times l\times n_3}$, and $L: \mathbb{R}^{n_1\times n_2\times n_3}\rightarrow \mathbb{R}^{n_1\times n_2\times n_3}$.\\
	\textbf{Output:} $\C = \A *_L \B\in\mathbb{R}^{n_1\times l\times n_3}$.
	\begin{enumerate}[1.]
		\item Compute $\Abar=L(\A)$ and $\Bbar = L(\B)$.
		\item Compute each frontal slice of $\Cbar$ by
		
		
		\hspace*{0.4cm} $\Cmbar^{(i)} = \Ambar^{(i)} \Bmbar^{(i)}$, $i=1,\cdots,  n_3$
		%
		
		\item Compute $\C=L^{-1}(\Cbar)$.
	\end{enumerate}
	\label{alg_ttprod}	
\end{algorithm}

The t-product enjoys many similar properties as the matrix-matrix product. For example, the t-product is associative, {i.e.}, $\A*_L(\B*_L\C) = (\A*_L\B)*_L\C$.
\begin{defn} \textbf{(Tensor transpose)}  \cite{kernfeld2015tensor}
	Let	$L$ be any invertible linear transform in (\ref{defL}), and $\A\in\mathbb{R}^{n_1\times n_2\times n_3}$. Then the tensor transpose of $\A$ under $L$, denoted as $\A^\top$,  satisfies $L(\A^\top)^{(i)} = (L(\A)^{(i)})^\top$, $i=1,\cdots,n_3$.
\end{defn}

\begin{defn} \textbf{(Identity tensor)} \cite{kernfeld2015tensor}
	Let	$L$ be any invertible linear transform in (\ref{defL}). Let $\I\in\mathbb{R}^{n\times n\times n_3}$ so that each frontal slice of $L(\I) = \Ibar$ is a $n\times n$ sized identity matrix. Then $\I = L^{-1}(\Ibar)$ is called the identity tensor under $L$.
\end{defn}
It is clear that $\A *_L\I = \A$ and $\I *_L \A = \A$ given the appropriate dimensions. The tensor $\Ibar=L(\I)$ is a tensor with each frontal slice being the identity matrix.
\begin{defn} \textbf{(Orthogonal tensor)} \cite{kernfeld2015tensor}
	Let	$L$ be any invertible linear transform in (\ref{defL}). A tensor $\Q\in\mathbb{R}^{n\times n\times n_3}$ is orthogonal under $L$ if it satisfies $\Q^\top *_L \Q=\Q*_L\Q^\top=\I$.
\end{defn}
\begin{defn}\textbf{(F-diagonal Tensor)}
	A tensor is called f-diagonal if each of its frontal slices is a diagonal matrix. 
\end{defn}
\begin{thm} \textbf{(T-SVD)} \cite{kernfeld2015tensor} \label{thmtsvd}
	Let	$L$ be any invertible linear transform in (\ref{defL}), and $\A\in\Rn$. Then it can be factorized as
	\begin{equation}\label{eq_tsvd}
	\A=\U*_L\SSS*_L\V^\top,
	\end{equation}
	where $\U\in \mathbb{R}^{n_1\times n_1\times n_3}$, $\V\in\mathbb{R}^{n_2\times n_2\times n_3}$ are orthogonal, and $\SSS\in\mathbb{R}^{n_1\times n_2\times n_3}$ is an f-diagonal tensor. 
\end{thm}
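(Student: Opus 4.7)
The plan is to exploit the equivalence between the $L$-t-product and ordinary matrix multiplication in the transform domain in order to reduce the desired factorization to a frontal-slice-by-frontal-slice application of the classical matrix SVD. Recall that $\C=\A*_L\B$ is equivalent to the block-diagonal matrix identity $\Cmbar=\Ambar\Bmbar$, so any factorization we assemble in the transform domain pulls back through $L^{-1}$ to a valid t-product factorization of $\A$.

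Concretely, I would first compute $\Abar=L(\A)$ and, for each $i=1,\ldots,n_3$, take a matrix SVD of the $i$-th frontal slice,
\begin{equation*}
\Ambar^{(i)}=\Umbar^{(i)}\Smbar^{(i)}(\Vmbar^{(i)})^\top,
\end{equation*}
with $\Umbar^{(i)}\in\mathbb{R}^{n_1\times n_1}$ and $\Vmbar^{(i)}\in\mathbb{R}^{n_2\times n_2}$ orthogonal and $\Smbar^{(i)}$ diagonal with non-negative singular values. Stacking these slices produces tensors $\Ubar,\Sbar,\Vbar$ in the transform domain, and I define $\U=L^{-1}(\Ubar)$, $\SSS=L^{-1}(\Sbar)$, $\V=L^{-1}(\Vbar)$.

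I would then verify the three required properties. The factorization $\A=\U*_L\SSS*_L\V^\top$ becomes, after applying $L$ to both sides and using the definition of tensor transpose under $L$ (which gives $L(\V^\top)^{(i)}=(\Vmbar^{(i)})^\top$), the slice-wise identity $\Ambar^{(i)}=\Umbar^{(i)}\Smbar^{(i)}(\Vmbar^{(i)})^\top$, which holds by construction. The orthogonality of $\U$, namely $\U^\top*_L\U=\U*_L\U^\top=\I$, reduces analogously to $(\Umbar^{(i)})^\top\Umbar^{(i)}=\Umbar^{(i)}(\Umbar^{(i)})^\top=\bm{I}_{n_1}$ for every $i$, which also holds by construction; the same argument handles $\V$.

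The one step that requires care is showing that $\SSS$ is f-diagonal in the original domain, since by construction it is $\Sbar$, not $\SSS$, whose frontal slices are diagonal. I would observe that, because $L$ acts only along mode-3, every tube $\SSS(i,j,:)$ is the image of $\Sbar(i,j,:)$ under $\Lm^{-1}$ applied as a vector transformation. Hence whenever $i\neq j$ we have $\Sbar(i,j,k)=0$ for all $k$ by construction, so $\SSS(i,j,k)=0$ for all $k$ as well, and every frontal slice of $\SSS$ is diagonal. This transparency of the off-diagonal zero pattern under the mode-3 action of any invertible $\Lm$ is the only subtlety distinguishing the general linear transform setting from the DFT case treated previously, and it is exactly what allows the matrix SVD construction to lift cleanly to a t-SVD under $L$.
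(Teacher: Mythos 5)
Your construction is correct and is essentially the paper's own route: the paper does not prove this theorem directly (it cites Kernfeld et al.), but its Algorithm 2 embodies exactly your argument—slice-wise matrix SVDs of $\Ambar^{(i)}$ in the transform domain, assembled and pulled back through $L^{-1}$, with orthogonality and the factorization verified slice-wise and f-diagonality of $\SSS$ following because the mode-3 action of $\Lm^{-1}$ sends zero tubes to zero tubes (the same observation the paper makes when defining the tubal rank). Since $\Lm$ here is real and invertible, realness of $\U,\SSS,\V$ is automatic, so no analogue of the conjugate-symmetry argument from the DFT case is needed.
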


\begin{figure}[!t]
	\centering
	\includegraphics[width=0.3\textwidth]{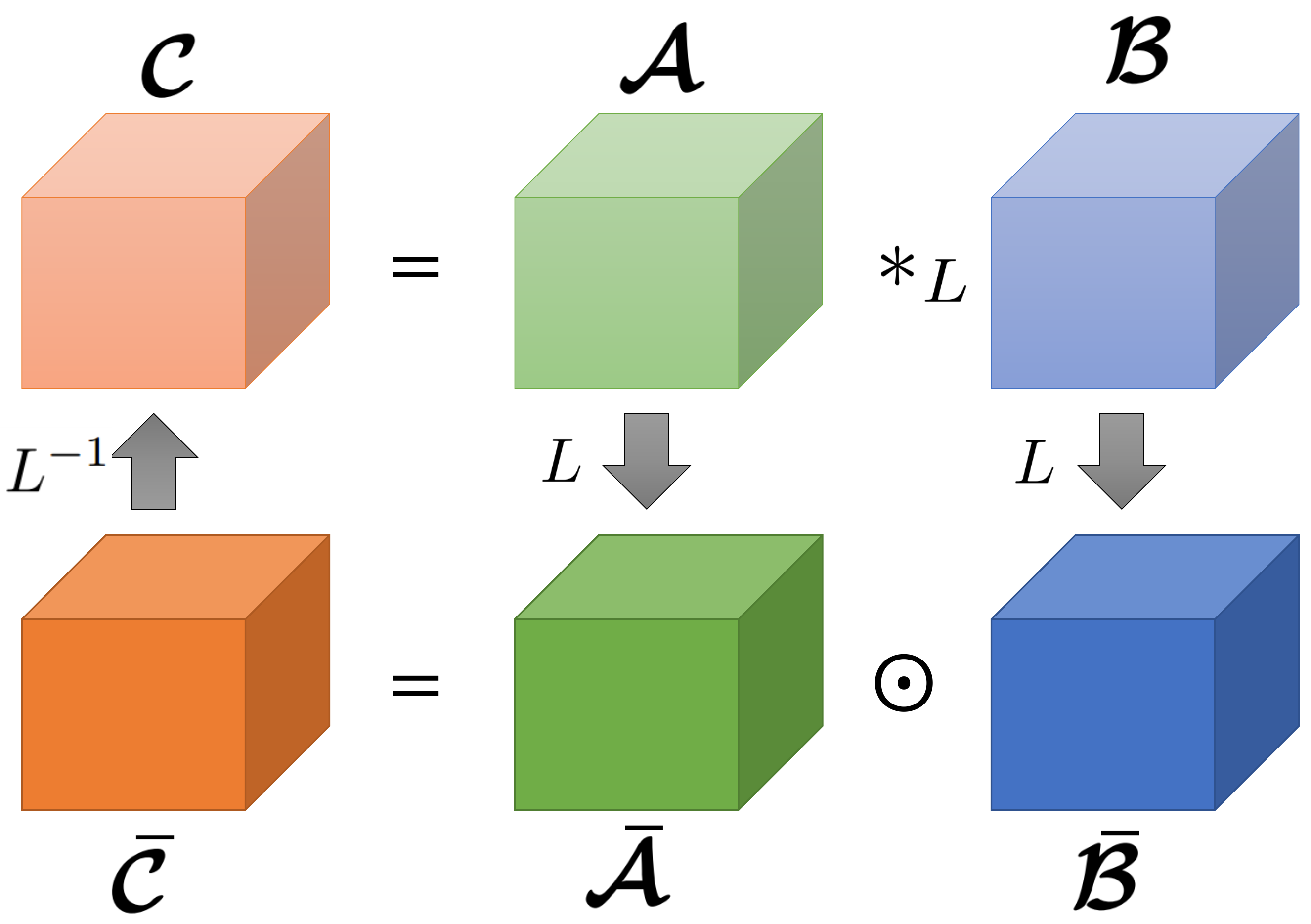}
	\caption{\small An illustration of the t-product  under linear transform $L$.}
	\label{fig_transtproduct}
\end{figure}

Figure \ref{fig_tsvd}  illustrates the t-SVD factorization. Also, t-SVD can be computed by performing matrix SVD in the transform domain. 
For any invertible linear transform $L$, we have $L(0)= L^{-1}(0) = 0$. So both $\SSS$ and $\Sbar$ are f-diagonal tensors. We can use their number of nonzero tubes to define the tensor tubal rank as  \cite{lu2018tensor}.

\begin{figure}[!t]
	\centering
	\includegraphics[width=0.5\textwidth]{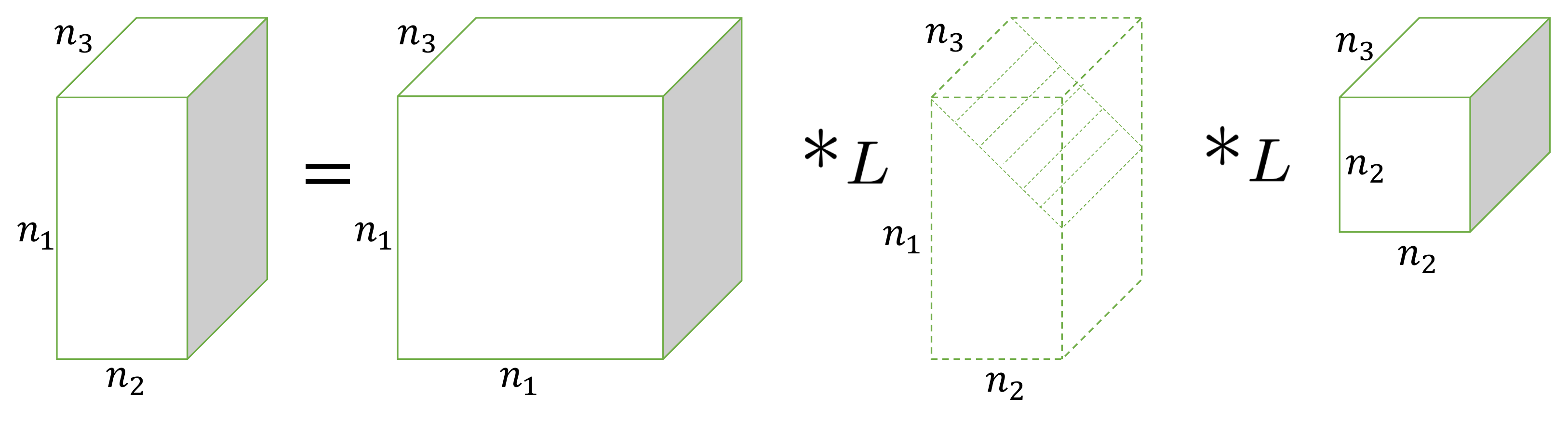}
	\caption{\small An illustration of the t-SVD under linear transform $L$ of a $n_1\times n_2\times n_3$ sized tensor.}
	\label{fig_tsvd}
\end{figure}

\begin{algorithm}[!t]
	\caption{T-SVD under linear transform $L$}
	\textbf{Input:} $\A\in\Rn$ and invertible linear transform $L$.\\
	\textbf{Output:} T-SVD components $\U$, $\SSS$ and $\V$ of $\A$.
	\begin{enumerate}[1.]
		\item Compute $\Abar=L(\A)$.
		\item Compute each frontal slice of $\Ubar$, $\Sbar$ and $\Vbar$ from $\Abar$ by
		
		\textbf{for} $i=1,\cdots,  n_3$ \textbf{do}
		
		\hspace*{0.4cm} $[\Umbar^{(i)},\Smbar^{(i)},\Vmbar^{(i)}] = \text{SVD}(\Ambar^{(i)})$;
		
		\textbf{end for}		
		\item Compute $\U=L^{-1}(\Ubar)$, $\SSS=L^{-1}(\Sbar)$, and $\V=L^{-1}(\Vbar)$.
	\end{enumerate}
	\label{alg_tsvd}	
\end{algorithm}

\begin{defn} \textbf{(Tensor tubal rank)}
	Let	$L$ be any invertible linear transform in (\ref{defL}), and $\A\in\Rn$. The tensor tubal rank of $\A$ under $L$, denoted as $\rankt(\A)$, is defined as the number of nonzero tubes of $\SSS$, where $\SSS$ is from the t-SVD of $\A=\U*_L\SSS*_L\V^\top$. We can write
	\begin{align*}
	\rankt(\A) = & \#\{i,\SSS(i,i,:)\neq {0}\} =  \#\{i,\Sbar(i,i,:)\neq {0}\}.
	\end{align*} 
\end{defn}

For $\A\in\Rn$ with tubal rank $r$, we also have the skinny t-SVD, {i.e.}, $\A=\U*_L\SSS*_L\V^\top$, where $\U\in\mathbb{R}^{n_1\times r\times n_3}$, $\SSS\in\mathbb{R}^{r\times r\times n_3}$, and $\V\in\mathbb{R}^{n_2\times r\times n_3}$, in which $\U^\top*_L\U=\I$ and $\V^\top*_L\V=\I$. We use the skinny t-SVD throughout this paper. The tensor tubal rank is nonconvex. In Section \ref{sec_rpca}, we will study the low tubal rank and sparse tensor decomposition problem by convex surrogate function minimization. At the following, we show how to define the convex tensor nuclear norm induced by the t-product under $L$. We can first define the tensor spectral norm as in \cite{lu2018tensor}.
\begin{defn}
	\textbf{(Tensor spectral norm)} Let $L$ be any invertible linear transform in (\ref{defL}), and $\A\in\Rn$. The tensor spectral norm of $\A$ under $L$ is defined as $\norm{\A} := \norm{\Ambar}$.
\end{defn}

The tensor nuclear norm can be defined as the dual norm of the tensor spectral norm. To this end, we need the following assumption on $L$ given in (\ref{defL}), i.e.,
\begin{align}\label{proL}
\Lm^\top \Lm = \Lm\Lm^\top = \ell  \Im_{n_3},
\end{align}
where $\ell>0$ is a constant. Using (\ref{proL}), we have the following important properties,
\begin{align}\label{eq_proinproduct}
\inproduct{\A}{\B}=\frac{1}{\ell}\inproduct{\Ambar}{\Bmbar},
\end{align}
\begin{align}\label{eq_proFnormNuclear}
\norm{\A}_F=\frac{1}{\sqrt{\ell}}\norm{\Ambar}_F.
\end{align}
For any $\B\in\Rn$ and $\Bmtilde\in\mathbb{R}^{n_1n_3\times n_2n_3}$, we have
\begin{align}
\norm{\A}_* 
:= & \sup_{\norm{\B}\leq1} \langle{\A},{\B}\rangle \label{eq_drivenuclearnorm0}\\
=&\sup_{\norm{\Bmbar}\leq1}\frac{1}{\ell}\langle{\Ambar},{\Bmbar}\rangle\label{eq_drivenuclearnorm1}\\
\leq &\frac{1}{\ell} \sup_{\norm{\Bmtilde}\leq1} \langle{\Ambar},{\Bmtilde}\rangle\label{eq_drivenuclearnorm2} \\
= &\frac{1}{\ell}\norm{\Ambar}_*,\label{eq_drivenuclearnorm3}
\end{align}
where (\ref{eq_drivenuclearnorm1}) uses (\ref{eq_proinproduct}), (\ref{eq_drivenuclearnorm2}) is due to the fact that $\Bmbar$ is a block diagonal matrix in $\mathbb{R}^{n_1n_3\times n_2n_3}$ while $\Bmtilde$ is an arbitrary matrix in $\mathbb{R}^{n_1n_3\times n_2n_3}$, and (\ref{eq_drivenuclearnorm3}) uses the fact that the matrix nuclear norm is the dual norm of the matrix spectral norm. On the other hand, let $\A=\U*_L\SSS*_L\V^\top$ be the t-SVD of $\A$ and $\B=\U*_L\V^\top$. We have
\begin{align}
\norm{\A}_* = & \sup_{\norm{\B}\leq1} \langle{\A},{\B}\rangle \notag\\
\geq & \langle{\U*_L\SSS*_L\V^\top},{\U*_L\V^\top}\rangle \notag\\
= & \langle{\U^\top*_L\U*_L\SSS},{\V^\top*_L\V}\rangle \notag\\
= & \langle{\SSS},\I\rangle \notag\\
= & \frac{1}{\ell} \langle \Sbar ,\Imbar \rangle\notag \\
= & \frac{1}{\ell} \Tr(\Smbar) \notag\\
= & \frac{1}{\ell} \norm{\Ambar}_*.    \label{defsipro}
\end{align}
Combining (\ref{eq_drivenuclearnorm0})-(\ref{eq_drivenuclearnorm3}) and (\ref{defsipro}), we then have the following definition of tensor nuclear norm.
\begin{defn}\label{defntnnours}
	\textbf{(Tensor nuclear norm)} Let $L$ be any invertible linear transform in (\ref{defL}) and it satisfies (\ref{proL}), and $\A=\U*_L\SSS*_L\V^\top$ be the t-SVD of $\A$. The tensor nuclear norm of $\A$ under $L$ is defined as $\norm{\A}_*:= \langle{\SSS},\I\rangle=\frac{1}{\ell} \norm{\Ambar}_*$.
\end{defn}
If we define the tensor  average rank as $\ranka(\A) = \frac{1}{\ell} \rankm(\Ambar)$, then it is easy to prove that the above tensor nuclear norm is the convex envelope of the tensor average rank within the domain $\{\A | \norm{\A} \leq 1 \}$ \cite{lu2018tensor}.



\section{TRPCA under Linear Transform and the Exact Recovery Guarantee}\label{sec_rpca}

With the new tensor rank and TNN definition in the previous section, now we consider the TRPCA problem and the convex model using TNN in (\ref{trpca}). Assume that we are given $\X = \LL_0 + \SSS_0$, where $\LL_0$ is of low tubal rank and $\SSS_0$ is sparse. The goal is to recover both $\LL_0$ and $\SSS_0$ from $\X$. We will show that both components can be exactly recovered by solving the convex program (\ref{trpca}) under certain incoherence conditions. In this section, we present the exact recovery guarantee of TRPCA in theory. We will also give the  detail for solving the convex program (\ref{trpca}).

\subsection{Tensor Incoherence Conditions}

As in the recovery problems of RPCA \cite{RPCA} and TRPCA~\cite{lu2016tensorrpca}, some incoherence conditions are required to avoid some pathological situations that the recovery is impossible. We need to assume that the low-rank component $\LL_0$ is not sparse. To this end, we assume $\LL_0$ to satisfy some incoherence conditions. Another identifiability issue arises if the sparse tensor has low tubal rank. This can be avoided by assuming that the support of $\SSS_0$ is uniformly distributed. We need the following tensor basis concept for defining the tensor incoherence conditions.

\begin{defn}\label{tensorbasis}
	\textbf{(Standard tensor basis)} Let $L$ be any invertible linear transform in (\ref{defL}) and it satisfies (\ref{proL}). We denote $\mathring{\mathfrak{e}}_i$ as the tensor \textbf{column basis}, which is a tensor of size	$n\times 1\times n_3$ with the entries of the  $(i,1,:)$ tube of $L(\mathring{\mathfrak{e}}_i)$ equaling 1 and the rest equaling 0.  Naturally its transpose $\mathring{\mathfrak{e}}_i^\top$ is called \textbf{row basis}. The other standard tensor basis is called \textbf{tube basis} $\dot{\mathfrak{e}}_k$, which is a tensor of size $1\times 1\times n_3$ with the $(1,1,k)$-th entry of $L(\dot{\mathfrak{e}}_k)$ equaling 1 and the rest equaling
	0.
\end{defn}

\begin{defn}\label{defticon}\textbf{(Tensor Incoherence Conditions)} 
	Let $L$ be any invertible linear transform in (\ref{defL}) and it satisfies (\ref{proL}). For $\LL_0\in\Rn$, assume that  $\rankt(\LL_0)=r$ and it has the skinny t-SVD  $\LL_0=\U*_L\SSS*_L\V^\top$, where $\U\in\mathbb{R}^{n_1\times r\times n_3}$ and
	$\V\in\mathbb{R}^{n_2\times r\times n_3}$ satisfy $\U^\top*_L\U=\I$ and $\V^\top*_L\V=\I$, and 
	$\SSS\in\mathbb{R}^{r\times r\times n_3}$ is a  f-diagonal tensor. Then $\LL_0$ is said to satisfy the tensor incoherence conditions with parameter $\mu$ if
	\begin{align}
	\max_{i=1,\cdots,n_1}\max_{k=1,\cdots,n_3} \norm{\U^\top*_L\mathring{\mathfrak{e}}_i *_L L(\dot{\mathfrak{e}}_k)}_F\leq\sqrt{\frac{\mu r}{n_1}}, \label{tic1}\\
	\max_{j=1,\cdots,n_2}\max_{k=1,\cdots,n_3} \norm{\V^\top*_L\mathring{\mathfrak{e}}_j*_L L(\dot{\mathfrak{e}}_k)}_F\leq\sqrt{\frac{\mu r}{n_2}},\label{tic2}
	\end{align}
	and
	\begin{equation}
	\norm{\U*_L\V^\top}_\infty\leq \sqrt{\frac{\mu r}{n_1n_2\ell}}.\label{tic3}
	\end{equation}	
\end{defn}
The incoherence condition guarantees that for small values of $\mu$, the singular vectors are reasonably spread out, or not sparse \cite{RPCA,candes2009exact}.  

\begin{prop}\label{pro_tic}
With the same notations in Definition \ref{defticon}, if the following conditions hold,
\begin{align}
\max_{i=1,\cdots,n_1} \norm{\U^\top*_L\mathring{\mathfrak{e}}_i}_F\leq\sqrt{\frac{\mu r}{n_1\ell}}, \label{ticv2}\\
\max_{j=1,\cdots,n_2} \norm{\V^\top*_L\mathring{\mathfrak{e}}_j}_F\leq\sqrt{\frac{\mu r}{n_2\ell}},\label{tic2v2}
\end{align}
then (\ref{tic1})-(\ref{tic2}) hold.
\end{prop}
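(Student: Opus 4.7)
The plan is to reduce \eqref{tic1} to \eqref{ticv2} (and \eqref{tic2} to \eqref{tic2v2}) by showing that right-multiplication by $L(\dot{\mathfrak{e}}_k)$ inflates the Frobenius norm by at most a factor of $\sqrt{\ell}$. Concretely, letting $\A := \U^\top *_L \mathring{\mathfrak{e}}_i$, I will prove the slice-selection bound
\begin{equation*}
\norm{\A *_L L(\dot{\mathfrak{e}}_k)}_F \leq \sqrt{\ell}\, \norm{\A}_F, \qquad \forall k = 1,\dots,n_3,
\end{equation*}
after which combining with \eqref{ticv2} immediately gives $\sqrt{\ell}\cdot\sqrt{\mu r /(n_1\ell)} = \sqrt{\mu r/n_1}$, establishing \eqref{tic1}; the argument for \eqref{tic2} is identical using $\V$ in place of $\U$.

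To establish the slice-selection bound, I would move to the transform domain. First I would unfold the definition of the tube basis: since $\dot{\mathfrak{e}}_k$ is defined so that $L(\dot{\mathfrak{e}}_k)$ is the $1\times 1\times n_3$ tensor whose $(1,1,k)$ entry equals $1$ and is zero elsewhere, applying $L$ once more simply multiplies its only nonzero tube by $\Lm$, giving
\begin{equation*}
L(L(\dot{\mathfrak{e}}_k))^{(k')} = \Lm_{k',k}, \qquad k' = 1,\dots,n_3.
\end{equation*}
By the frontal-slice-wise characterization of $*_L$, the tensor $\B := \A *_L L(\dot{\mathfrak{e}}_k)$ then satisfies $\Bbar^{(k')} = \Lm_{k',k}\, \Abar^{(k')}$ for every $k'$.

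Combining this with the norm identity \eqref{eq_proFnormNuclear} and the assumption \eqref{proL}, which yields $\sum_{k'} \Lm_{k',k}^2 = \ell$ (the $k$-th diagonal entry of $\Lm^\top\Lm$), I obtain
\begin{align*}
\norm{\B}_F^2
&= \frac{1}{\ell}\sum_{k'=1}^{n_3} \Lm_{k',k}^2 \, \norm{\Abar^{(k')}}_F^2 \\
&\leq \frac{1}{\ell} \Big(\max_{k'} \norm{\Abar^{(k')}}_F^2\Big) \sum_{k'=1}^{n_3}\Lm_{k',k}^2 \\
&= \max_{k'} \norm{\Abar^{(k')}}_F^2
\;\leq\; \sum_{k'=1}^{n_3}\norm{\Abar^{(k')}}_F^2
\;=\; \ell\, \norm{\A}_F^2,
\end{align*}
which is the slice-selection bound. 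Taking maxima over $i$ and $k$, and invoking \eqref{ticv2}, yields \eqref{tic1}; the same computation with $\V$ yields \eqref{tic2}.

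There is no real obstacle beyond correctly unpacking $L(\dot{\mathfrak{e}}_k)$ and tracking the factor $\ell$; the crucial ingredient is the orthogonality-type condition \eqref{proL}, which controls the column sums of squares of $\Lm$ and lets us absorb the tube-basis selection into a benign $\sqrt{\ell}$ factor. This same factor is exactly what is needed to convert the per-slice scaling $\sqrt{\mu r/(n_1\ell)}$ in \eqref{ticv2}--\eqref{tic2v2} into the uniform-in-$k$ scaling $\sqrt{\mu r/n_1}$ appearing in \eqref{tic1}--\eqref{tic2}, confirming that \eqref{ticv2}--\eqref{tic2v2} are indeed stronger.
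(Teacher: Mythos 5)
Your proposal is correct and follows essentially the same route as the paper's own proof: pass to the transform domain via the Frobenius-norm identity (\ref{eq_proFnormNuclear}), observe that multiplying by $L(\dot{\mathfrak{e}}_k)$ scales the frontal slices by the entries of the $k$-th column of $\Lm$, and use $\Lm^\top\Lm=\ell\Im_{n_3}$ to absorb this into a $\sqrt{\ell}$ factor that exactly converts the bound $\sqrt{\mu r/(n_1\ell)}$ of (\ref{ticv2})--(\ref{tic2v2}) into (\ref{tic1})--(\ref{tic2}). Your explicit slice-wise computation simply unpacks the paper's one-line inequality $\norm{L(\U^\top)\odot L(\mathring{\mathfrak{e}}_i)\odot L(L(\dot{\mathfrak{e}}_k))}_F\leq\norm{L(\U^\top)\odot L(\mathring{\mathfrak{e}}_i)}_F\,\norm{L(L(\dot{\mathfrak{e}}_k))}_F$, so the two arguments are the same in substance.
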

\begin{proof}
	By using property (\ref{eq_proFnormNuclear}),  we have
	\begin{align*}
	& \norm{\U^\top*_L\mathring{\mathfrak{e}}_i *_L L(\dot{\mathfrak{e}}_k)}_F \\
	= &\frac{1}{\sqrt{\ell}} \norm{L(\U^\top) \odot L(\mathring{\mathfrak{e}}_i) \odot L(L(\dot{\mathfrak{e}}_k))}_F \\
	\leq &\frac{1}{\sqrt{\ell}} \norm{L(\U^\top) \odot L(\mathring{\mathfrak{e}}_i)}_F \norm{L(L(\dot{\mathfrak{e}}_k))}_F \\
	= & \sqrt{\ell} \norm{\U^\top*_L\mathring{\mathfrak{e}}_i }_F,
	\end{align*}
	where we use  $\norm{L(L(\dot{\mathfrak{e}}_k))}_F=\sqrt{\ell}\norm{L(\dot{\mathfrak{e}}_k)}_F=\sqrt{\ell}$ due to property (\ref{proL}). Thus, if  (\ref{ticv2}) hold, then (\ref{tic1}) hold. Similarly, we have the same relationship between  (\ref{tic2v2}) and (\ref{tic2}).
\end{proof}


The tensor incoherence conditions are used in the proof of Theorem \ref{thm1}. Though similar formulations as  (\ref{ticv2})-(\ref{tic2v2}) are  widely used in RPCA \cite{RPCA} and TRPCA \cite{lu2018tensor}, Proposition \ref{pro_tic} shows that our new conditions (\ref{tic1})-(\ref{tic2})  are less  restrictive. And thus we use the new conditions to replace  (\ref{ticv2})-(\ref{tic2v2}) in our proofs.

\subsection{Main Results}
Define $n_{(1)}=\max(n_1,n_2)$ and $n_{(2)}=\min ({n_1,n_2})$. We show that  the convex program (\ref{trpca}) is able to perfectly recover the low-rank  and sparse components.

\begin{thm}\label{thm1}
	Let $L$ be any invertible linear transform in (\ref{defL}) and it satisfies (\ref{proL}). Suppose $\LL_0\in \mathbb{R}^{n\times n\times n_3}$ obeys (\ref{tic1})-(\ref{tic3}). Fix any  $\nss$  tensor $\M$ of signs. Suppose that the support set $\Omegat$ of $\SSS_0$ is uniformly distributed among all sets of cardinality $m$, and that $\sgn{[\SSS_0]_{ijk}}=[\M]_{ijk}$ for all $(i,j,k)\in\Omegat$. Then, there exist universal constants $c_1, c_2>0$ such that with probability at least $1-c_1(nn_3)^{-c_2}$ (over the choice of  support of $\SSS_0$), $\{\LL_0,\SSS_0\}$ is the unique minimizer to (\ref{trpca}) with $\lambda = 1/\sqrt{n\ell}$, provided that
	\begin{equation}
	\rankt(\LL_0)\leq \frac{ \rho_r n  }{\mu(\log(nn_3))^{2} } \text{ and } m\leq \rho_sn^2n_3,
	\end{equation}
	where $\rho_r$ and $\rho_s$ are positive constants. If $\LL_0\in\Rn$ has rectangular frontal slices, TRPCA with $\lambda = 1/\sqrt{\none \ell}$ succeeds with probability at least $1-c_1(\none n_3)^{-c_2}$, provided that $\rankt(\LL_0)\leq \frac{ \rho_r \ntwo }{\mu(\log(\none n_3))^{2} } \text{ and } m\leq \rho_sn_1n_2 n_3$.
\end{thm}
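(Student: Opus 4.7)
\medskip

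\noindent\textbf{Proof proposal.} The plan is to follow the dual certificate approach of Cand\`es et al.\ for matrix RPCA, adapted to the tensor setting via the transform domain representation. The central observation is that when $\Lm^\top\Lm=\Lm\Lm^\top=\ell\Im_{n_3}$, the linear map $L$ is (up to a factor $\sqrt{\ell}$) a unitary change of basis along the tubes, so that inner products and Frobenius norms are preserved up to the factor $\ell$ (cf.\ (\ref{eq_proinproduct}) and (\ref{eq_proFnormNuclear})), and the t-product in the transform domain becomes the frontal-slice-wise matrix product $\Cmbar=\Ambar\Bmbar$ with $\Ambar$ block-diagonal. This lets us port the matrix-RPCA analysis slice by slice, while the tensor incoherence of Definition~\ref{defticon} plays the role of the matrix incoherence.

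\medskip

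\noindent\textbf{Step 1 (Optimality and dual certificate).} First I would write down the subdifferential of $\norm{\cdot}_*$ at $\LL_0$. Given the skinny t-SVD $\LL_0=\U*_L\SSS*_L\V^\top$, define the tangent space
\begin{equation*}
T=\{\U*_L\Y^\top+\W*_L\V^\top:\Y\in\mathbb{R}^{n_2\times r\times n_3},\ \W\in\mathbb{R}^{n_1\times r\times n_3}\},
\end{equation*}
with projector $\PT$. Then $\partial\norm{\LL_0}_*=\{\U*_L\V^\top+\W:\PT\W=\0,\ \norm{\W}\le 1\}$. A standard convex analysis argument reduces exact recovery to producing a dual certificate $\W$ and verifying the injectivity condition $T\cap\{\SSS:\Pomega\SSS=\0\}=\{\0\}$ together with $\PT\Pomega\PT\prec\PT$, giving the sufficient condition: there exists $\W$ such that
\begin{equation*}
\PT(\W)=\U*_L\V^\top-\lambda\,\PT(\sgn{\SSS_0}),\quad \norm{\PTo(\W)}<\tfrac{1}{2},
\end{equation*}
\begin{equation*}
\Pomega(\W)=\lambda\sgn{\SSS_0},\quad \norm{\Pomegao(\W)}_\infty<\tfrac{\lambda}{2}.
\end{equation*}

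\medskip

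\noindent\textbf{Step 2 (Reduction to Bernoulli and key operator bounds).} By the standard equivalence between uniform and Bernoulli sampling of $\Omegat$, I assume $\Omegat\sim\mathrm{Ber}(\rho_0)$ with $\rho_0=m/(n_1n_2n_3)$. The workhorse estimates to establish are:
\begin{enumerate}[(i)]
\item $\norm{\PT\Pomegao\PT-(1-\rho_0)\PT}\le\varepsilon$ w.h.p., leading to injectivity on $T$;
\item $\norm{\PT(\Z)-\rho_0^{-1}\PT\Pomega\PT(\Z)}_\infty\lesssim\varepsilon\norm{\Z}_\infty$;
\item $\norm{(\I-\rho_0^{-1}\Pomega)\Z}\lesssim\sqrt{nn_3}\norm{\Z}_\infty$ for $\Z\in T$.
\end{enumerate}
Each of these reduces, via the identities $\norm{\A}_F^2=\frac{1}{\ell}\sum_i\norm{\Ambar^{(i)}}_F^2$ and $\norm{\A}=\max_i\norm{\Ambar^{(i)}}$, to a non-commutative Bernstein / matrix Bernstein inequality applied to a sum of independent rank-one tensor operators $\{\langle\ei\ek\ej^*,\cdot\rangle\ei\ek\ej^*\}_{(i,j,k)\in\Omegat}$, where the incoherence bounds (\ref{tic1})--(\ref{tic3}) control the variance and $\infty$-norm parameters.

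\medskip

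\noindent\textbf{Step 3 (Construct the certificate via the golfing scheme).} I would split $\W=\WL+\WS$. For $\WL$, partition $\Omegao$ into $j_0=\lceil 2\log(nn_3)\rceil$ independent Bernoulli layers $\Omegat_1,\ldots,\Omegat_{j_0}$ and use Gross's golfing iteration
\begin{equation*}
\Y_j=\Y_{j-1}+q^{-1}\Romega_{\Omegat_j}\PT(\U*_L\V^\top-\Y_{j-1}),\quad \Y_0=\0,
\end{equation*}
setting $\WL=\Y_{j_0}$. The bound (i) gives geometric contraction in Frobenius norm of the residual $\PT(\U*_L\V^\top)-\PT\Y_j$, while (ii)--(iii) furnish the $\infty$-norm and spectral-norm control required for $\norm{\PTo(\WL)}<1/4$ and $\norm{\Pomega(\WL)}_\infty<\lambda/4$. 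For $\WS$, set $\WS=\lambda\Pomega(\I-\PT\Pomega\PT)^{-1}\sgn{\SSS_0}$ (a Neumann series made legitimate by (i)); incoherence plus a matrix-Bernstein bound applied to $\PTo\Pomega\sgn{\SSS_0}$ yields $\norm{\PTo(\WS)}<1/4$. Combining gives the required certificate for $\lambda=1/\sqrt{n\ell}$ (and $\lambda=1/\sqrt{n_{(1)}\ell}$ in the rectangular case, where $\ell$ enters through (\ref{tic3}) and $\norm{\sgn{\SSS_0}}\lesssim\sqrt{n_{(1)}n_3}$).

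\medskip

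\noindent\textbf{Step 4 (Main obstacle).} The principal difficulty is that the previous DFT-based TRPCA proof exploits the unitarity of $\F_{n_3}/\sqrt{n_3}$ and the complex block-diagonal reduction in (\ref{dftpro}); here $\Lm$ is only assumed to satisfy $\Lm^\top\Lm=\ell\Im$ with $\ell$ not necessarily $1$, and $\Lm$ need not diagonalise the block-circulant matrix. Consequently, every scaling by $\ell$ must be tracked carefully through the operator bounds (i)--(iii), and the standard basis tensors $\ei,\ej,\ek$ must be interpreted via $L(\cdot)$ rather than spatially (see Definition~\ref{tensorbasis}), which is precisely why the paper introduces the slightly relaxed incoherence (\ref{tic1})--(\ref{tic2}) and proves it is implied by the stronger (\ref{ticv2})--(\ref{tic2v2}) in Proposition~\ref{pro_tic}. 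The delicate point will be to verify that the matrix Bernstein bounds applied in the transform domain yield the same sample complexity $\rho_r n/\mu(\log(nn_3))^2$ and $\rho_s n^2 n_3$ as in the DFT case after the $\ell$-bookkeeping, so that the final $\lambda=1/\sqrt{n\ell}$ is indeed the correct balancing choice between the nuclear-norm and $\ell_1$ terms.
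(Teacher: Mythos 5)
Your plan follows essentially the same route as the paper's proof (deferred to its supplementary material): dual certification carried out in the transform domain after a Bernoulli reduction, a golfing-scheme construction for the low-rank certificate and a Neumann-series least-squares certificate for the sparse part, with the $\ell$-scaling bookkeeping and the $L$-defined standard tensor basis and relaxed incoherence conditions (\ref{tic1})--(\ref{tic2}) that you correctly single out in Step 4 as the genuinely new ingredients relative to the DFT-based argument of \cite{lu2016tensorrpca,lu2018tensor}. The one point to repair is Step 1: demanding $\PT(\W)=\U*_L\V^\top-\lambda\,\PT(\sgn{\SSS_0})$ and $\Pomega(\W)=\lambda\sgn{\SSS_0}$ exactly is too strong, since the golfing iteration only yields an approximate certificate; you need the standard relaxed sufficient condition (allowing, e.g., a Frobenius-norm slack of order $\lambda/4$ on the $\Omegat$-component, as in the matrix RPCA dual-certificate lemma), which is in fact what your Step 3 bounds actually certify, so the fix is routine rather than structural.
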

Theorem \ref{thm1} gives the exact recovery guarantee for convex model (\ref{trpca}) under certain conditions. It says that the incoherent $\LL_0$ can be recovered for $\rankt(\LL_0)$ on the order of $n/(\mu(\log n n_3)^2)$ and a number of nonzero entries in $\SSS_0$ on the order of $n^2n_3$. 
TRPCA is also parameter free.
If $n_3=1$, both our model (\ref{trpca}) and result in Theorem \ref{thm1} reduces to the matrix RPCA cases in \cite{RPCA}. Intuitively, the TNN definition and its based TRPCA model in \cite{lu2018tensor} fall into the special cases of ours when the discrete Fourier transform is used. We would like to emphasize some more differences from \cite{lu2018tensor}  as follows:
\begin{enumerate}[1.]
	\item It is obvious our TNN and its based TRPCA model under linear transforms are much more general than the ones in \cite{lu2018tensor} which uses the discrete Fourier transform. We only require the linear transform satisfies property (\ref{proL}). We then can use different linear transforms for different purposes, e.g., improving the learning performance for different data or tasks or improving the efficiency by using special linear transforms.
	\item Our tensor incoherence conditions, theoretical exact recovery guarantee in Theorem \ref{thm1} and its proof do not fall into the cases in \cite{lu2018tensor} even the discrete Fourier transform is used. The key difference is that we restrict the linear transform within the real domain $L: \mathbb{R}^{n_1\times n_2\times n_3}\rightarrow\mathbb{R}^{n_1\times n_2\times n_3}$ while the discrete Fourier transform is a mapping from the real domain to the complex domain  $L: \mathbb{R}^{n_1\times n_2\times n_3}\rightarrow\mathbb{C}^{n_1\times n_2\times n_3}$. Several special properties of the discrete Fourier transform are used in their proof of exact recovery guarantee, but these techniques are not applicable in our proofs. For example, the definitions of the standard tensor basis are different. This leads to different tensor incoherence conditions and thus different proofs. We discuss more in the proofs which are provided in the supplementary material.
\end{enumerate}

%

Problem (\ref{trpca}) can be solved by the standard Alternating Direction Method of Multipliers (ADMM) \cite{lu2018unified}. We only give the detail for solving the following key subproblem in ADMM, i.e., for any $\Y \in\Rn$,

\begin{equation}\label{potnn}
\min_{\X} \ \tau \norm{\X}_*+\frac{1}{2}\norm{\X - \Y}_F^2.
\end{equation}
Let $\Y= \U *_L \SSS *_L \V^\top$ be the tensor SVD of $\Y$. For any $\tau>0$, we define the Tensor Singular Value Thresholding (T-SVT) operator as follows
\begin{equation}\label{eqn_tsvt}
\mathcal{D}_{\tau}(\Y) = \U *_L \SSS_\tau *_L\V^\top,
\end{equation}
where $\SSS_\tau = L^{-1}(( L(\SSS) -\tau)_+)$. Here $t_+ = \max(t,0)$.  The T-SVT operator is the proximity operator associated with the proposed tensor nuclear norm. 
\begin{thm}\label{thmtsvt} 
	Let $L$ be any invertible linear transform in (\ref{defL}) and it satisfies (\ref{proL}).
	For any $\tau>0$ and $\Y \in\Rn$, the T-SVT operator (\ref{eqn_tsvt}) obeys 
	\begin{equation}\label{thmeqnsvt}
	\mathcal{D}_{\tau}(\Y) = \arg\min_{\X} \ \tau \norm{\X}_*+\frac{1}{2}\norm{\X - \Y}_F^2.
	\end{equation}
\end{thm}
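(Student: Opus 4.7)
The plan is to reduce the tensor proximal problem to $n_3$ decoupled matrix proximal problems via the transform $L$, solve each by the standard matrix singular value thresholding (SVT) result, and then reassemble.

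First, I would rewrite the objective in (\ref{thmeqnsvt}) entirely in the transform domain. Since $L$ is invertible, the map $\X \mapsto \Xmbar = \bdiag(L(\X))$ is a bijection from $\Rn$ onto the set of block diagonal matrices of the appropriate size. Using property (\ref{eq_proFnormNuclear}), we have $\norm{\X - \Y}_F^2 = \frac{1}{\ell}\norm{\Xmbar - \Ymbar}_F^2$, and by Definition \ref{defntnnours} we have $\norm{\X}_* = \frac{1}{\ell}\norm{\Xmbar}_*$. Therefore
\begin{equation*}
\tau\norm{\X}_* + \frac{1}{2}\norm{\X - \Y}_F^2 = \frac{1}{\ell}\left(\tau\norm{\Xmbar}_* + \frac{1}{2}\norm{\Xmbar - \Ymbar}_F^2\right),
\end{equation*}
so minimizing over $\X$ is equivalent to minimizing the bracketed quantity over block diagonal $\Xmbar$ of the same block structure as $\Ymbar$.

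Second, I would exploit that both the nuclear norm and the squared Frobenius norm of a block diagonal matrix split as sums over its diagonal blocks, namely $\norm{\Xmbar}_* = \sum_{i=1}^{n_3}\norm{\Xmbar^{(i)}}_*$ and $\norm{\Xmbar - \Ymbar}_F^2 = \sum_{i=1}^{n_3}\norm{\Xmbar^{(i)} - \Ymbar^{(i)}}_F^2$. Hence the problem decouples into $n_3$ independent subproblems
\begin{equation*}
\min_{\Xmbar^{(i)}} \ \tau\norm{\Xmbar^{(i)}}_* + \frac{1}{2}\norm{\Xmbar^{(i)} - \Ymbar^{(i)}}_F^2, \quad i=1,\dots,n_3.
\end{equation*}
Each of these is the classical matrix nuclear norm proximal problem, whose unique minimizer is the matrix SVT applied to $\Ymbar^{(i)}$: if $\Ymbar^{(i)} = \Umbar^{(i)}\Smbar^{(i)}\Vmbar^{(i)\top}$ is an SVD, then the optimum is $\Umbar^{(i)}(\Smbar^{(i)} - \tau)_+ \Vmbar^{(i)\top}$.

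Third, I would reassemble. Stacking the optimal blocks into a block diagonal matrix $\Xmbar^\star$ corresponds, through the bijection, to the tensor $\X^\star$ with $L(\X^\star)^{(i)} = \Umbar^{(i)}(\Smbar^{(i)} - \tau)_+ \Vmbar^{(i)\top}$, which is exactly the T-SVT $\mathcal{D}_\tau(\Y) = \U *_L \SSS_\tau *_L \V^\top$ defined via Algorithm \ref{alg_tsvd} and the thresholding $\SSS_\tau = L^{-1}((L(\SSS) - \tau)_+)$. Uniqueness in the original domain follows from the uniqueness in each matrix subproblem together with the bijectivity of $L$.

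The construction is essentially bookkeeping; there is no genuine obstacle, and the assumption $\Lm^\top\Lm = \Lm\Lm^\top = \ell \Im_{n_3}$ is used only to obtain the clean scaling identities (\ref{eq_proFnormNuclear}) and Definition \ref{defntnnours} that make the reduction to the block-diagonal matrix problem possible. The one subtlety worth stating explicitly is that minimizing the bracketed quantity over all (not necessarily block diagonal) matrices in $\mathbb{R}^{n_1 n_3 \times n_2 n_3}$ would in general yield a non-block-diagonal minimizer, but the minimization we need is restricted to block diagonal matrices (those arising from $\bdiag(L(\X))$), and for that restricted problem the per-block decoupling is exact.
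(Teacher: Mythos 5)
Your proposal is correct and follows essentially the same route as the paper's proof: pass to the transform domain via (\ref{eq_proFnormNuclear}) and Definition \ref{defntnnours}, decouple the objective over the $n_3$ frontal slices of $\Xmbar$, apply the matrix singular value thresholding result to each block, and reassemble via $L^{-1}$. Your explicit remark that the minimization stays restricted to block diagonal matrices coming from $\bdiag(L(\X))$ is a welcome clarification of a point the paper leaves implicit, but it does not change the argument.
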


The main cost of   ADMM for solving (\ref{trpca}) is to compute $\mathcal{D}_{\tau}(\Y)$ for solving (\ref{thmeqnsvt}). For any general linear transform $L$ in (\ref{defL}), it is easy to see that the per-iteration cost is $O(n_1n_2n_3^2 + n_{(1)} n_{(2)}^2 n_3)$. Such a cost is the same as that in TRPCA using DFT as the linear transform \cite{lu2016tensorrpca}.

\section{Experiments}\label{sec_exp}
We  present an empirical study of our method. The goal of this study is two-fold: a) establish that
the convex program (\ref{trpca}) indeed recovers the low-rank and sparse parts correctly, and thus verify our  result in Theorem \ref{thm1}; b) show that our tensor methods are superior to matrix based RPCA methods and other existing TRPCA methods in practice.

\subsection{Exact Recovery from Varying Fractions of Error}

We first verify the correct recovery guarantee of Theorem \ref{thm1} on randomly generated tensors. Since Theorem \ref{thm1} holds for any invertible linear transform $L$ with $\Lm$ in (\ref{defL}) satisfying (\ref{proL}). So we consider two cases of $\Lm$: (1) Discrete Cosine Transform (DCT) which is used in the original work of transforms based t-product \cite{kernfeld2015tensor}. We use the Matlab command \mcode{dct} to generate the DCT matrix $\Lm$.  (2) Random Orthogonal Matrix (ROM) generated by the method with codes available online\footnote{https://www.mathworks.com/matlabcentral/fileexchange/11783-randorthmat.}. In both cases,   (\ref{proL}) holds with $\ell=1$.
We use the same way for generating the random data as~\cite{lu2018tensor}. We simply consider the tensors of size $n\times n \times n$, with $n=$100, 200 and 300. We generate a tensor with tubal rank $r$ as a product $\LL_0=\PP*_L\Q^\top$, where $\PP$ and $\Q$ are ${n\times r\times n}$ tensors with entries independently sampled from $\mathcal{N}(0,1/n)$ distribution. The support set $\Omegat$ (with size $m$) of $\SSS_0$ is chosen uniformly at random. For all $(i,j,k)\in\Omegat$, let $[\SSS_0]_{ijk}=\M_{ijk}$, where $\M$ is a tensor with independent Bernoulli $\pm 1$ entries. For different size $n$, we set the tubal rank of $\LL_0$ as $0.1n$ and consider two cases of the sparsity $m=\norm{\SSS_0}_0=0.1n^3$ and $0.2n^3$. 

\renewcommand{\arraystretch}{1.1}
\begin{table}[]
	\scriptsize
	\centering
	\caption{\small Correct recovery for random problems of varying size. The Discrete
		Cosine Transform (DCT) is used as the invertible linear transform $L$.}
	\label{my-label}
	$r=\rankt(\LL_0)=0.1n$, $m=\norm{\SSS_0}_0=0.1n^3$
	\begin{tabular}{c|c|c|c|c|c|c}
		\hline
		$n$ & $r$ & $m$      & $\rankt(\Lhat)$ &  $\|\Shat\|_0$     &  $\frac{\norm{\Lhat-\LL_0}_F}{\norm{\LL_0}_F}$   &  $\frac{\norm{\Shat-\SSS_0}_F}{\norm{\SSS_0}_F}$                             \\ \hline\hline
		100       & 10   & $1\e{5}$  & 10    & 102,921    & $2.4\e{-6}$ & $8.7\e{-10}$\\ \hline
		200       & 20   & $8\e{5}$ & 20    & 833,088     & $6.8\e{-6}$ & $8.7\e{-10}$\\ \hline
		300       & 30   & $27\e{5}$ & 30    & 2,753,084  & $1.8\e{-5}$ & $1.4\e{-9}$ \\ \hline
	\end{tabular}
	\vspace{0.1cm}
	
	$r=\rankt(\LL_0)=0.1n$, $m=\norm{\SSS_0}_0=0.2n^3$
	\begin{tabular}{c|c|c|c|c|c|c}
		\hline
		$n$ & $r$ & $m$      & $\rankt(\Lhat)$ &  $\|\Shat\|_0$     &  $\frac{\norm{\Lhat-\LL_0}_F}{\norm{\LL_0}_F}$   &  $\frac{\norm{\Shat-\SSS_0}_F}{\norm{\SSS_0}_F}$                             \\ \hline\hline
		100       & 10   & $2\e{5}$  & 10    & 201,090    & $4.0\e{-6}$ & $2.1\e{-10}$ \\ \hline
		200       & 20   & $16\e{5}$ & 20    & 1,600,491 & $5.3\e{-6}$ & $9.8\e{-10}$ \\ \hline
		300       & 30   & $54\e{5}$ & 30    & 5,460,221 & $1.8\e{-5}$ & $1.8\e{-9}$ \\ \hline
	\end{tabular}\label{tab_recov_dct}
\end{table}

\renewcommand{\arraystretch}{1.1}
\begin{table}[]
	\scriptsize
	\centering
	\caption{\small Correct recovery for random problems of varying size. A Random Orthogonal Matrix (ROM) is used as the invertible linear transform $L$.}\vspace{-0.2cm}
	$r=\rankt(\LL_0)=0.1n$, $m=\norm{\SSS_0}_0=0.1n^3$
	\begin{tabular}{c|c|c|c|c|c|c}
		\hline
		$n$ & $r$ & $m$      & $\rankt(\Lhat)$ &  $\|\Shat\|_0$     &  $\frac{\norm{\Lhat-\LL_0}_F}{\norm{\LL_0}_F}$   &  $\frac{\norm{\Shat-\SSS_0}_F}{\norm{\SSS_0}_F}$                             \\ \hline\hline
		100       & 10   & $1\e{5}$  & 10    & 103,034    & $2.7\e{-6}$ & $9.6\e{-9}$\\ \hline
		200       & 20   & $8\e{5}$ & 20    & 833,601     & $7.0\e{-6}$ & $9.0\e{-10}$\\ \hline
		300       & 30   & $27\e{5}$ & 30    & 2,852,933  & $1.8\e{-5}$ & $1.3\e{-9}$ \\ \hline
	\end{tabular}
	\vspace{0.1cm}
	
	$r=\rankt(\LL_0)=0.1n$, $m=\norm{\SSS_0}_0=0.2n^3$
	\begin{tabular}{c|c|c|c|c|c|c}
		\hline
		$n$ & $r$ & $m$      & $\rankt(\Lhat)$ &  $\|\Shat\|_0$     &  $\frac{\norm{\Lhat-\LL_0}_F}{\norm{\LL_0}_F}$   &  $\frac{\norm{\Shat-\SSS_0}_F}{\norm{\SSS_0}_F}$                             \\ \hline\hline
		100       & 10   & $2\e{5}$  & 10    & 201,070    & $4.3\e{-6}$ & $2.3\e{-9}$ \\ \hline
		200       & 20   & $16\e{5}$ & 20    & 1,614,206 & $ 5.4\e{-6}$ & $9.9\e{-10}$ \\ \hline
		300       & 30   & $54\e{5}$ & 30    & 5,457,874 & $9.6\e{-6}$ & $9.5\e{-10}$ \\ \hline
	\end{tabular}\label{tab_recov_rom}
\end{table}

Table \ref{tab_recov_dct} and \ref{tab_recov_rom} report the recovery results which use the DCT and ROM as the linear transform $L$, respectively.  It can be seen that our convex program (\ref{trpca}) gives the correct tubal rank estimation of $\LL_0$ in all cases and also the relative errors ${\|{\hat{\LL}-\LL_0}\|_F}/{\norm{\LL_0}_F}$ are very small, all around $10^{-5} $. The sparsity estimation of $\SSS_0$ is not as exact as the rank estimation, but note that the relative errors ${\|{\Shat-\SSS_0}\|_F}/{\norm{\SSS_0}_F}$ are all very small, less than $10^{-8}$ (actually much smaller than the relative errors of the recovered low-rank component). These results well verify the correct recovery phenomenon as claimed in Theorem \ref{thm1} under different linear transforms.

\subsection{Phase Transition in Tubal Rank and Sparsity}

Theorem \ref{thm1} shows that the recovery is correct when the tubal rank of $\LL_0$ is relatively low and $\SSS_0$ is relatively sparse. Now we examine the recovery phenomenon with varying tubal rank of $\LL_0$ from varying sparsity of $\SSS_0$. {We consider the tensors of  size $\mathbb{R}^{n\times n\times n_3}$, where $n=100$ and $n_3=50$. We generate $\LL_0$ by the same way as the previous section. We consider two distributions of the support of $\SSS_0$. The first case is the Bernoulli model for the support of $\SSS_0$, with random signs: each entry of $\SSS_0$ takes on value 0 with probability $1-\rho$, and values $\pm1$ each with probability $\rho/2$. The second case chooses the support $\bm\Omega$ in accordance with the Bernoulli model, but this time sets $\SSS_0=\Pomega \text{sgn}(\LL_0)$.} We set $\frac{r}{n}=[0.01:0.01:0.5]$ and $\rho_s=[0.01:0.01:0.5]$. For each $(\frac{r}{n},\rho_s)$-pair, we simulate 10 test instances and declare a trial to be successful if the recovered $\hat{\LL}$ satisfies ${\|{\hat{\LL}-\LL_0}\|_F}/{\norm{\LL_0}_F}\leq 10^{-3}$. Figure \ref{fig_sparsityvsrank} and \ref{fig_sparsityvsrank_sign} respectively show the the fraction of correct recovery for each pair $(\frac{r}{n},\rho_s)$ for two settings of $\SSS_0$. The white region indicates the exact recovery while the black one indicates the failure.  The experiment shows that the recovery is correct when the tubal rank of $\LL_0$ is relatively low and the errors $\SSS_0$ is relatively sparse. More importantly,   the results show that the used linear transforms are not important, as long as the property (\ref{proL}) holds. The recovery performances are very similar even different linear transforms are used. This well verify our main result in Theorem \ref{thm1}. The recovery phenomenons are also similar to RPCA \cite{RPCA} and TRPCA~\cite{lu2018tensor}. But note that our results are much more general as we can use different linear transforms. 

\begin{figure}[!t]
	\centering
	\begin{subfigure}[b]{0.235\textwidth}
		\centering
		\includegraphics[width=\textwidth]{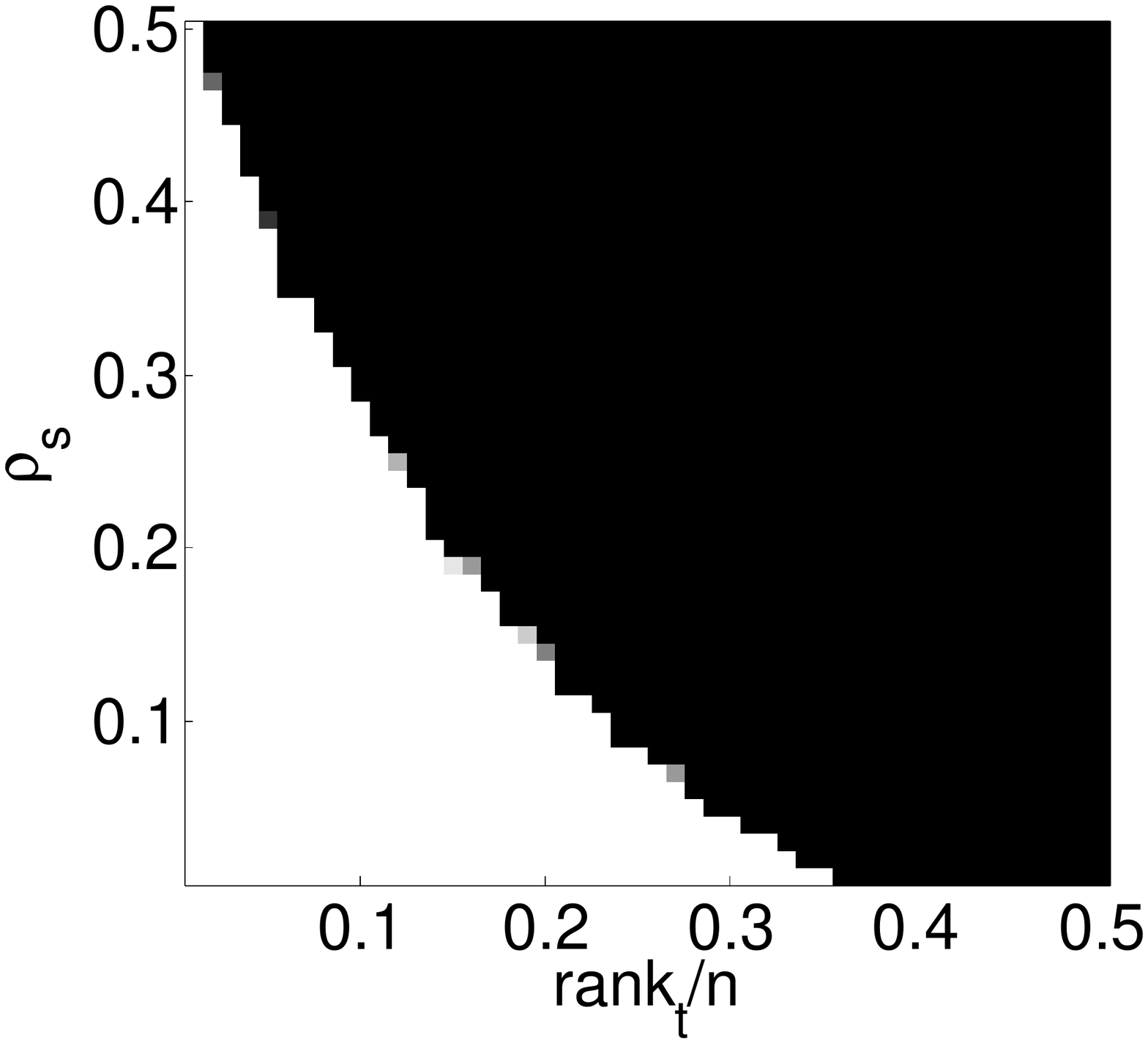}
		\caption{Random Signs, DCT}
	\end{subfigure}
	\begin{subfigure}[b]{0.235 \textwidth}
		\centering
		\includegraphics[width=\textwidth]{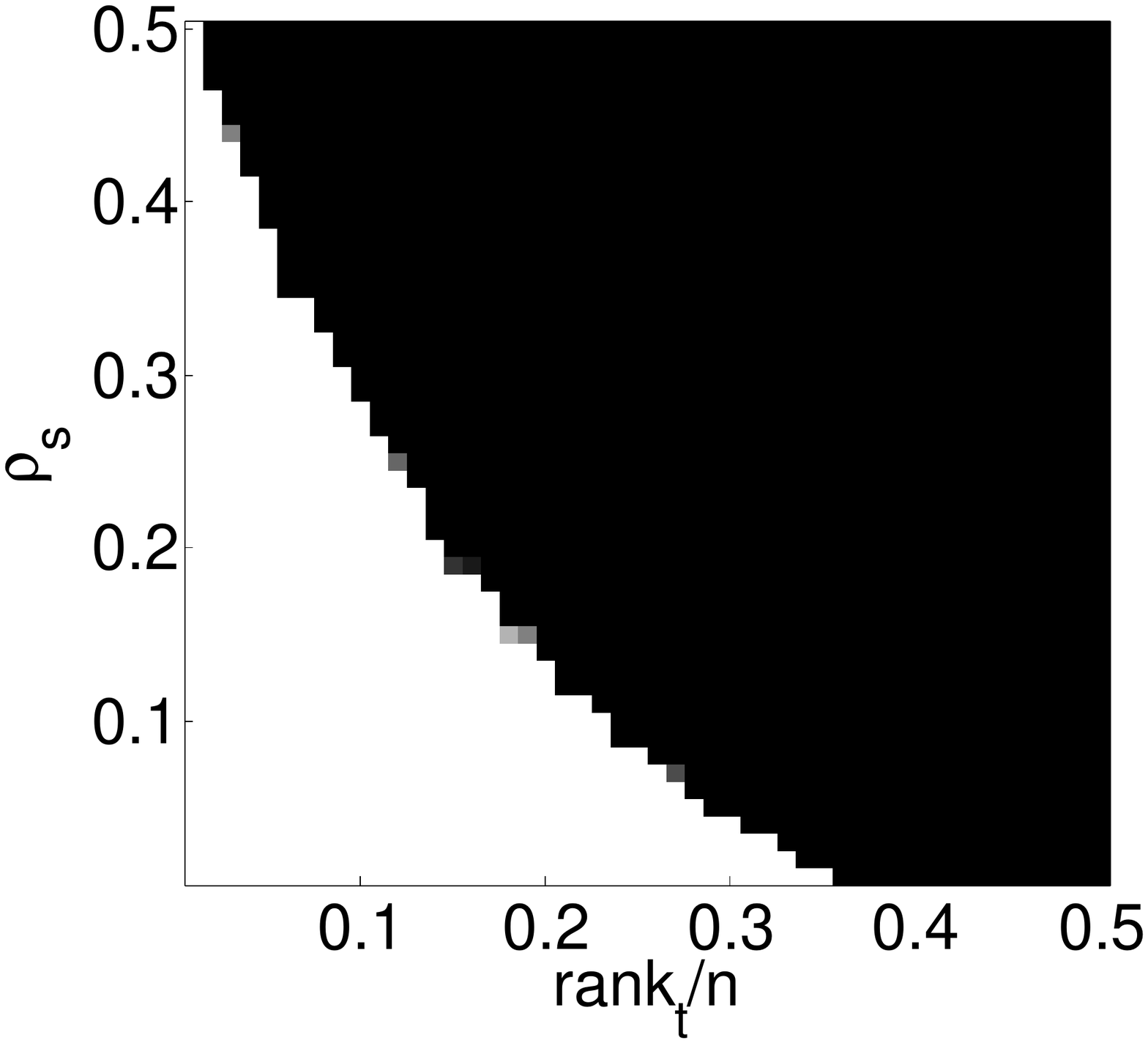}
		\caption{Random Signs, ROM}
	\end{subfigure}
	\caption{\small{Correct recovery for varying rank and sparsity. In this test, sgn($\SSS_0$) is random. Fraction of correct recoveries across 10 trials, as a function of $\rankt(\LL_0)$ (x-axis) and sparsity of $\SSS_0$ (y-axis). Left: DCT is used as the linear transform $L$. Right: ROM is used as the linear transform $L$.}}
	\label{fig_sparsityvsrank} 
\end{figure}

\begin{figure}[!t]
	\centering
	\begin{subfigure}[b]{0.235\textwidth}
		\centering
		\includegraphics[width=\textwidth]{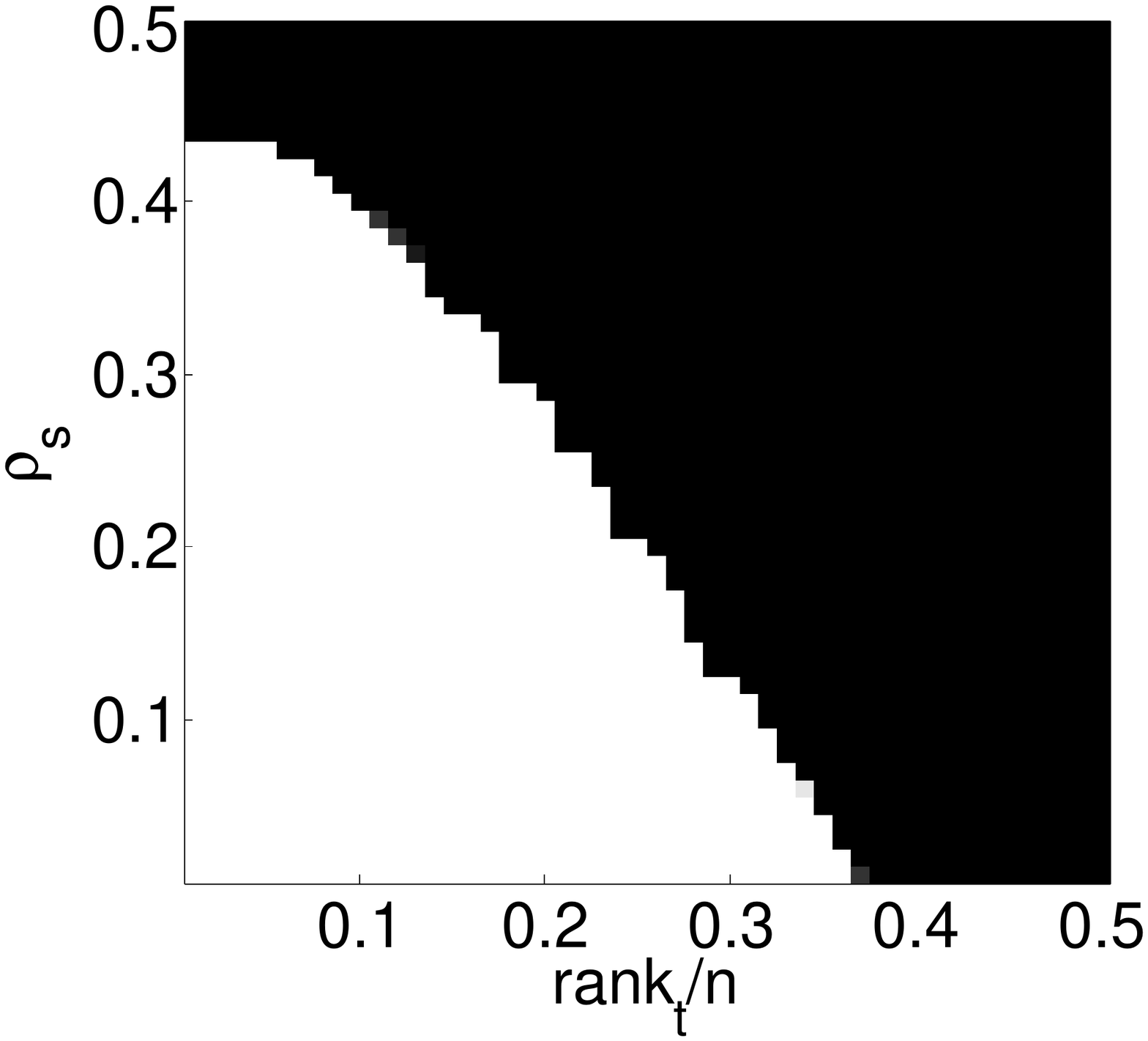}
		\caption{\small{Coherent Signs, DCT}}
	\end{subfigure}
	\begin{subfigure}[b]{0.235 \textwidth}
		\centering
		\includegraphics[width=\textwidth]{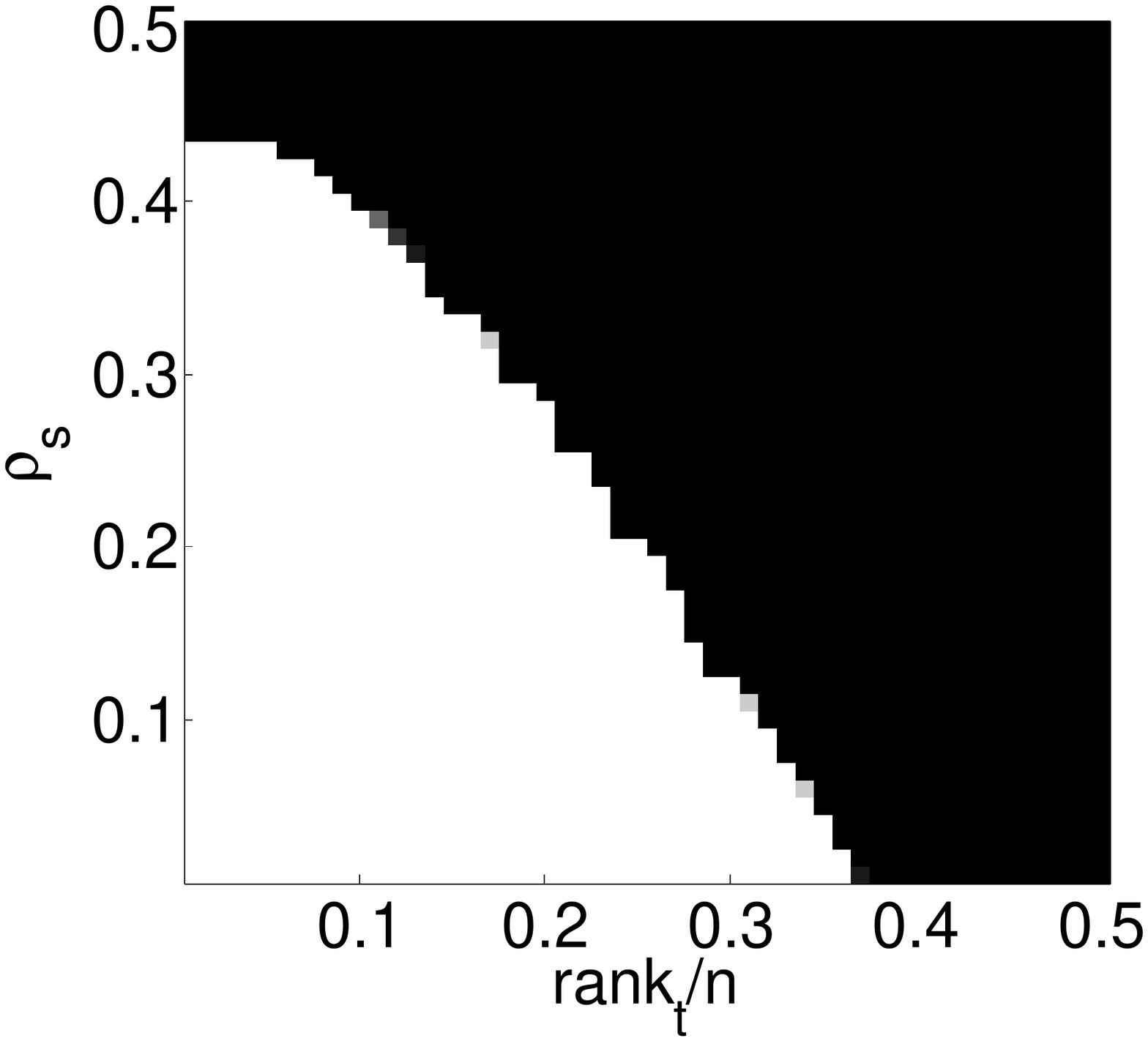}
		\caption{\small{Coherent Signs, ROM}}
	\end{subfigure}
	\caption{\small{Correct recovery for varying rank and sparsity. In this test, $\SSS_0 = \Pomega\text{sgn}(\LL_0)$. Fraction of correct recoveries across 10 trials, as a function of $\rankt(\LL_0)$ (x-axis) and sparsity of $\SSS_0$ (y-axis). Left: DCT is used as the linear transform $L$. Right: ROM is used as the linear transform $L$.}}
	\label{fig_sparsityvsrank_sign}
\end{figure}



\subsection{Applications to Image Recovery}

RPCA and tensor RPCA have been applied for image recovery \cite{liu2013tensor,lu2016tensorrpca,lu2018exact}. The main motivation is that the color image can be well approximated by low-rank matrices or tensors. In this section, we consider to apply our TRPCA model using DCT as the linear transform for image recovery, and  compare our method with state-of-the-art methods.


For any color image of size $n_1\times n_2$, we can formate it as a tensor $n_1\times n_2 \times n_3$, where $n_3=3$. The frontal slices correspond to the three channels of the color image\footnote{We observe that this way of tensor construction achieves the best performance in practice, though we do not show the results of other ways of tensor construction due to space limit.}. We randomly select 100 color images from the Berkeley segmentation dataset~\cite{martin2001database} for this test. 
We randomly set $10\%$ of pixels to random values in [0, 255], and the positions of the corrupted pixels are unknown. We compare our TRPCA model with RPCA \cite{RPCA}, SNN \cite{huang2014provable}  and TRPCA~\cite{lu2016tensorrpca}, which also own the theoretical recovery guarantee. For RPCA, we apply it on each channel separably and combine the results to obtain the recovered image.  For SNN \cite{huang2014provable} , we set $\bm{\lambda}=[15, 15, 1.5]$ as in \cite{lu2016tensorrpca}. TRPCA \cite{lu2016tensorrpca} is a special case of our new TRPCA using discrete Fourier transform. In this experiment, we use DCT and ROM as the transforms in our new TRPCA method. We denote our method using these two transforms as TRPCA-DCT and TRPCA-ROM, respectively. The Peak Signal-to-Noise Ratio (PSNR) value
\begin{equation*}\label{psnreq}
\text{PSNR}(\Lhat,\LL_0) = 10\log_{10}\left( \frac{\norm{\LL_0}_\infty^2}{\frac{1}{n_1n_2n_3} {\|\Lhat-\LL_0\|_F^2}} \right), 
\end{equation*}
is used to evaluate the recovery performance. The higher PSNR value indicates better recovery performance.

Figure \ref{fig_imgres} shows the comparison of the PSNR values and running time of all the compared methods. Some examples are given in Figure \ref{fig_imageinpr}. It can be seen that in most case, our new TRPCA-DCT achieves the best performance. 
This implies that the discrete Fourier transform used in \cite{lu2016tensorrpca} may not be the best in this task, though the reason is not very clear now. However, if
the random orthogonal matrix (ROM) is used as the linear transform, the results are generally bad. This is reasonable and it implies that the choice of the linear transform $L$ is crucial in practice, though the best linear transform is currently not clear. Figure \ref{fig_imgres} (b) shows that our method is as efficient as RPCA and TRPCA in \cite{lu2016tensorrpca}.

\begin{figure*}[!t]
	\centering
	\begin{subfigure}[b]{0.95\textwidth}
		\centering
		\includegraphics[width=\textwidth]{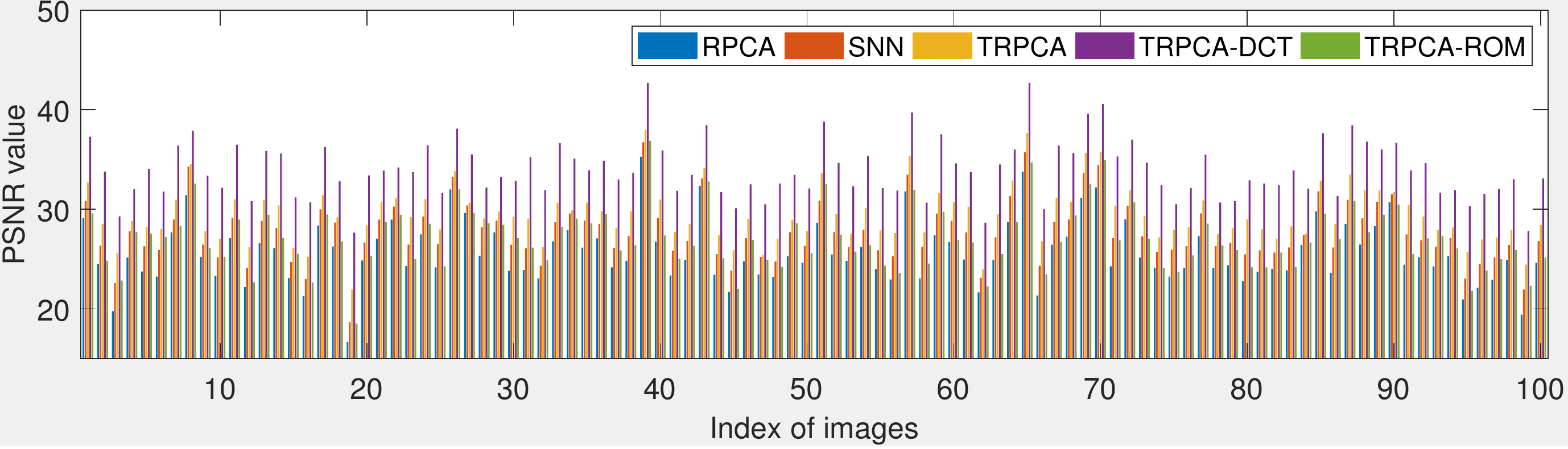}
	\end{subfigure}
	\begin{subfigure}[b]{0.95\textwidth}
		\centering
		\includegraphics[width=\textwidth]{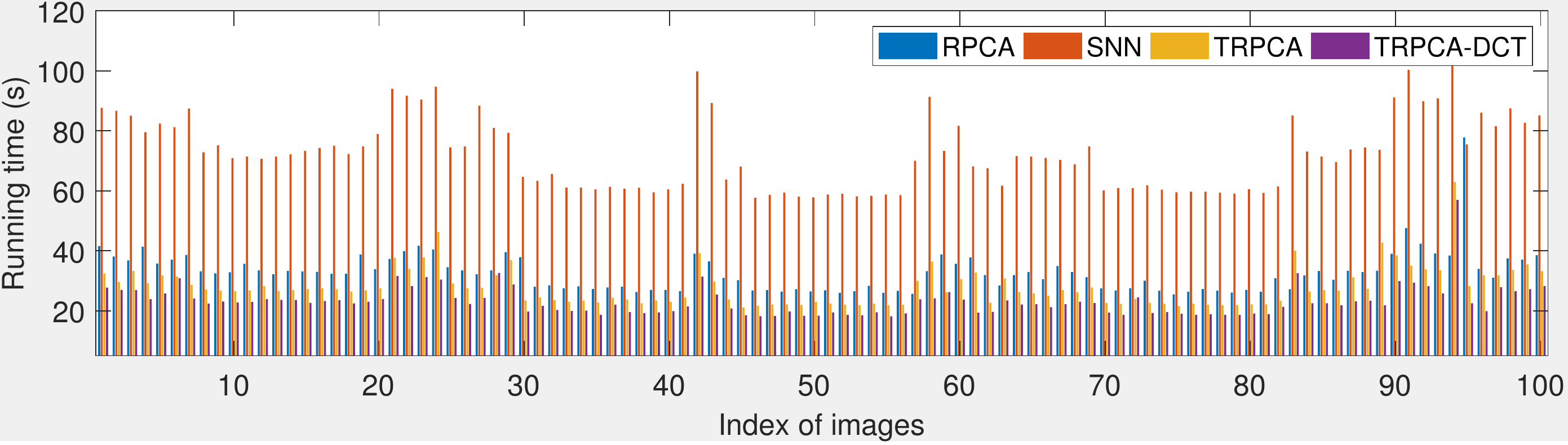}
	\end{subfigure}
	\caption{\small{ Comparison of the PSNR values (up) and running time (bottom) of RPCA, SNN, TRPCA, our TRPCA-DCT and TRPCA-ROM for image denoising on 50 images. \textbf{Best viewed in $\times2$ sized color pdf file.} }}
	\label{fig_imgres}
\end{figure*}

\begin{figure*}[!t]
	\centering	
	\begin{subfigure}[b]{0.16\textwidth}
		\centering
		\includegraphics[width=\textwidth]{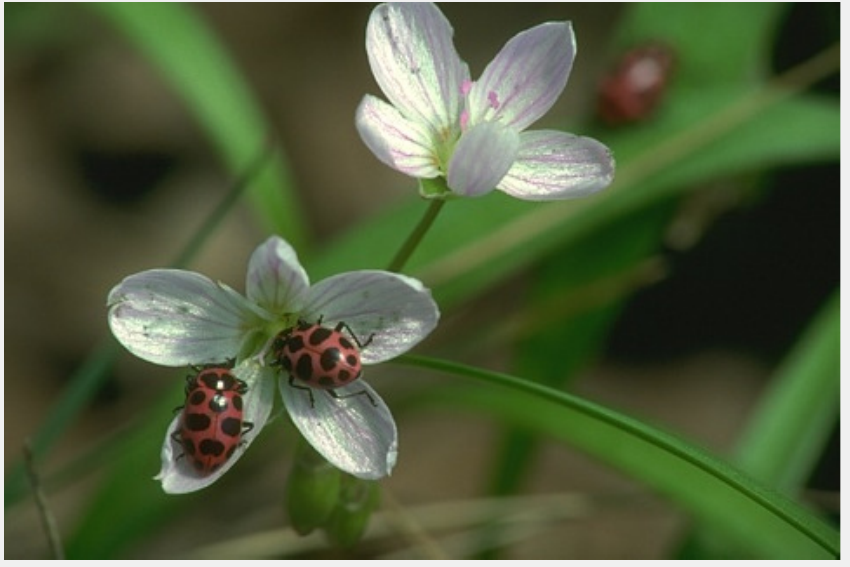}
	\end{subfigure}
	\begin{subfigure}[b]{0.16\textwidth}
		\centering
		\includegraphics[width=\textwidth]{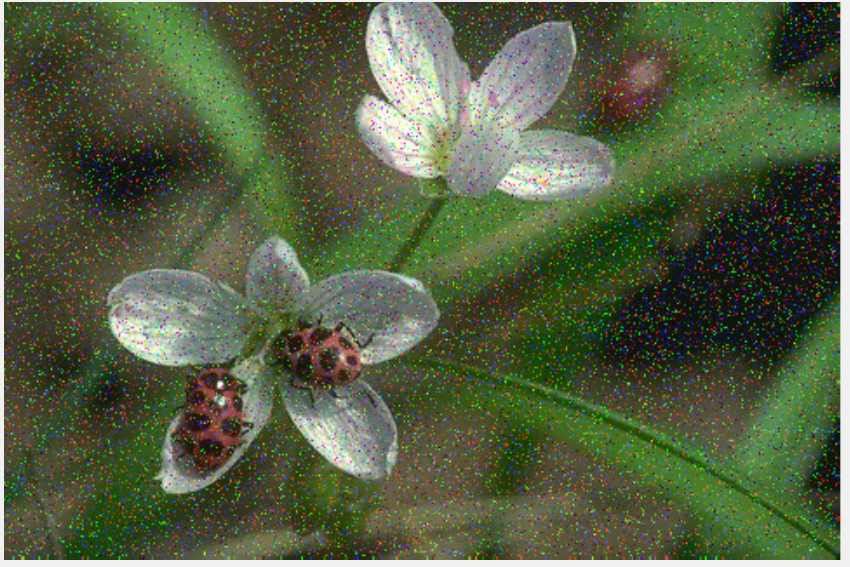}
	\end{subfigure}
	\begin{subfigure}[b]{0.16\textwidth}
		\centering
		\includegraphics[width=\textwidth]{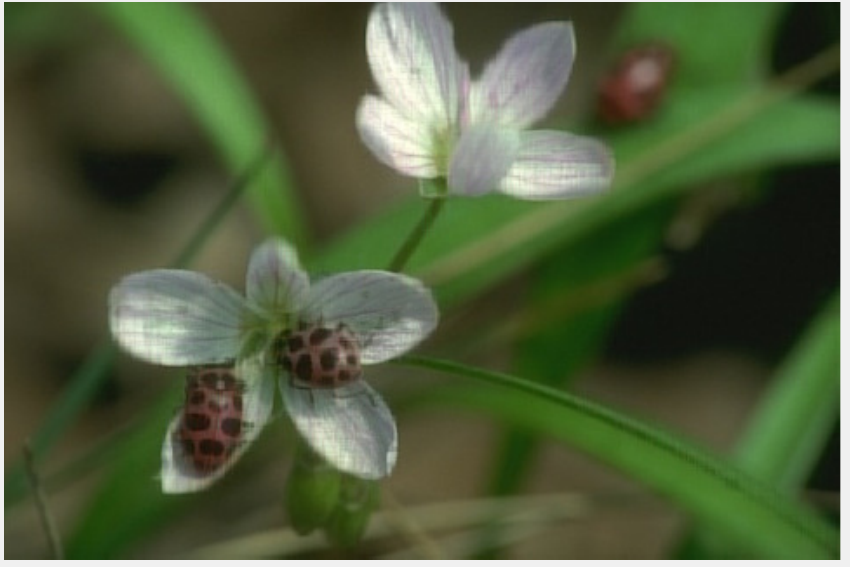}
	\end{subfigure}
	\begin{subfigure}[b]{0.16\textwidth}
		\centering
		\includegraphics[width=\textwidth]{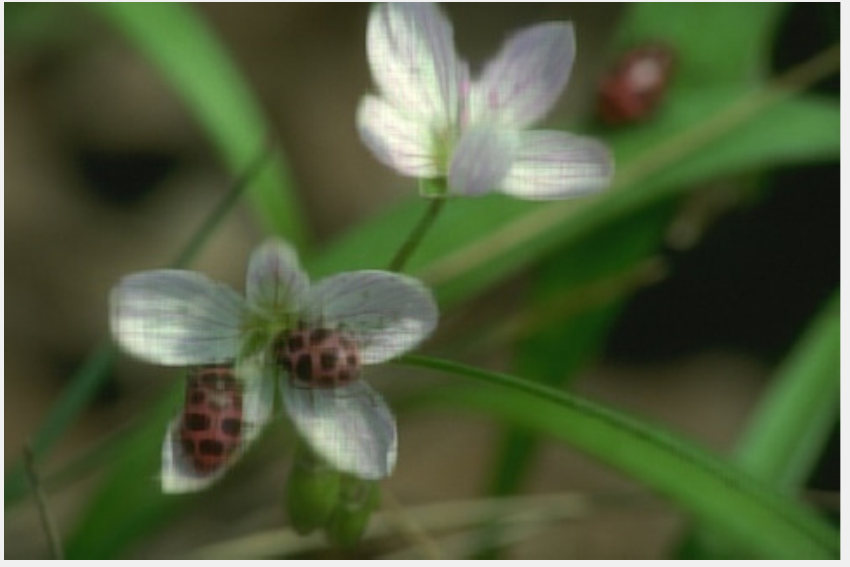}
	\end{subfigure}
	\begin{subfigure}[b]{0.16\textwidth}
		\centering
		\includegraphics[width=\textwidth]{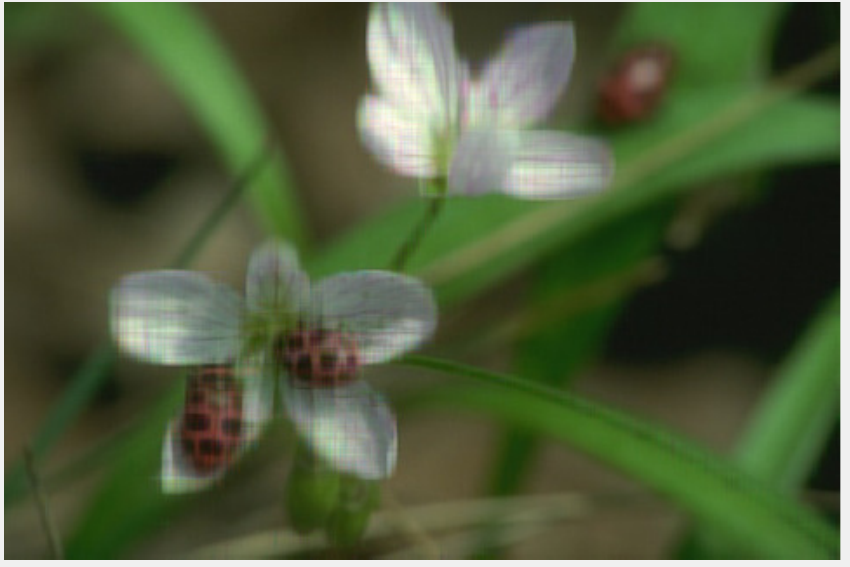}
	\end{subfigure}	
	\begin{subfigure}[b]{0.16\textwidth}
		\centering
		\includegraphics[width=\textwidth]{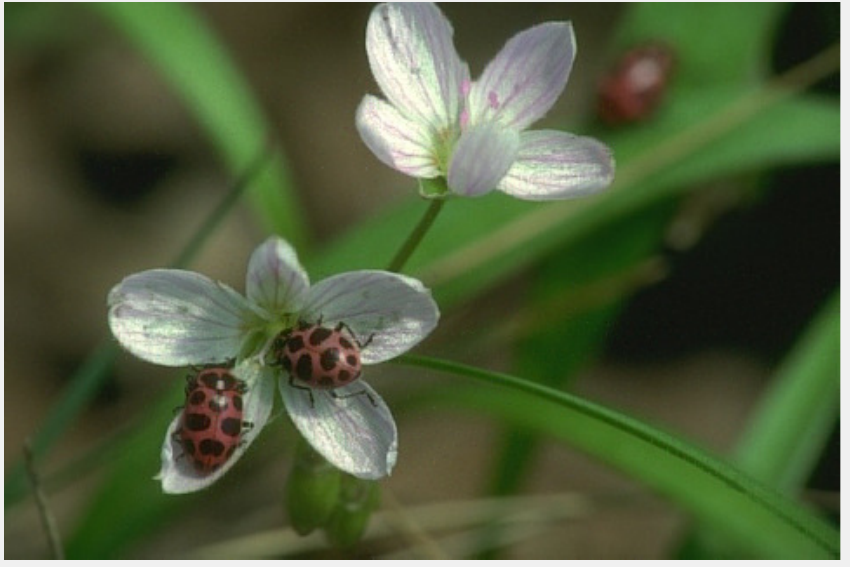}
	\end{subfigure}		
	\\\vspace{0.1em} 
	
	\begin{subfigure}[b]{0.16\textwidth}
		\centering
		\includegraphics[width=\textwidth]{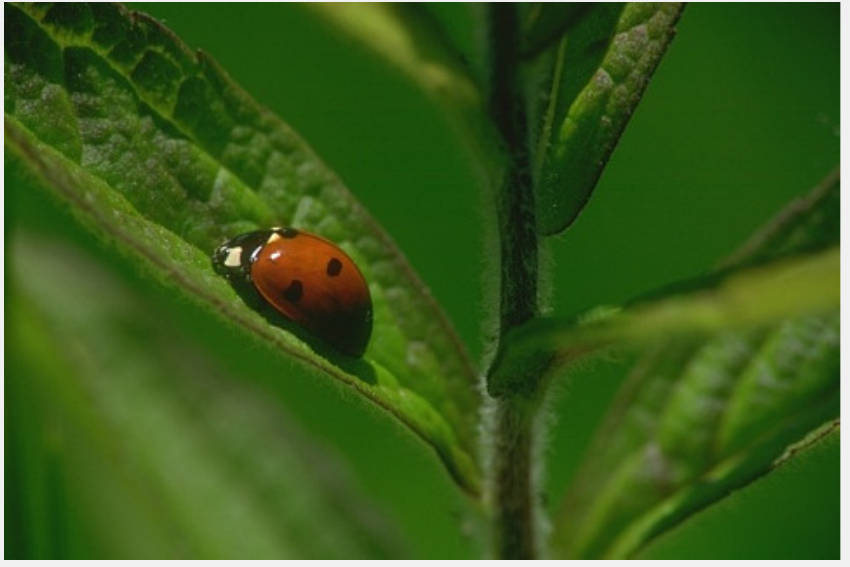}
	\end{subfigure}
	\begin{subfigure}[b]{0.16\textwidth}
		\centering
		\includegraphics[width=\textwidth]{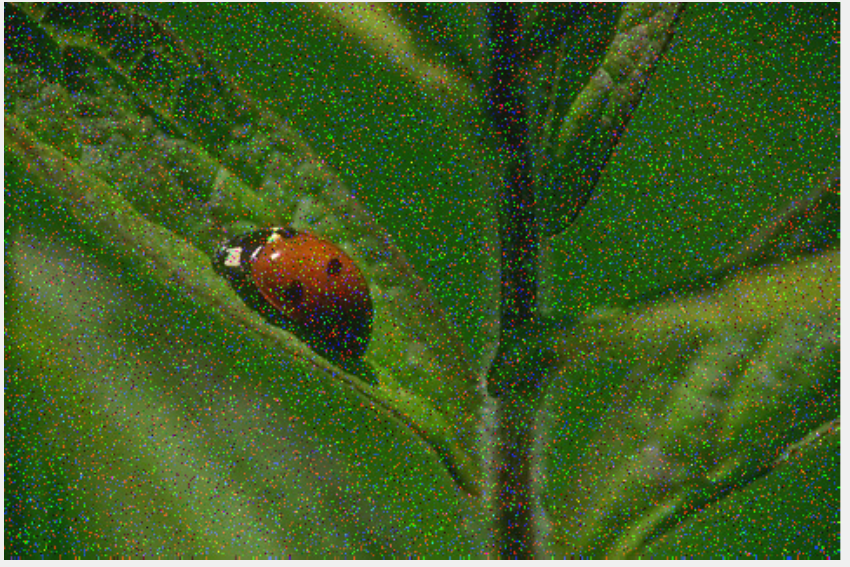}
	\end{subfigure}
	\begin{subfigure}[b]{0.16\textwidth}
		\centering
		\includegraphics[width=\textwidth]{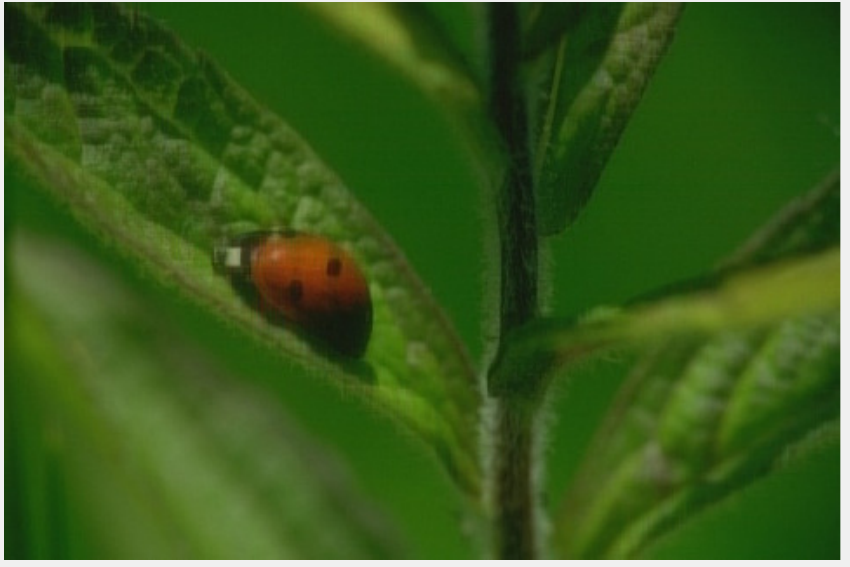}
	\end{subfigure}
	\begin{subfigure}[b]{0.16\textwidth}
		\centering
		\includegraphics[width=\textwidth]{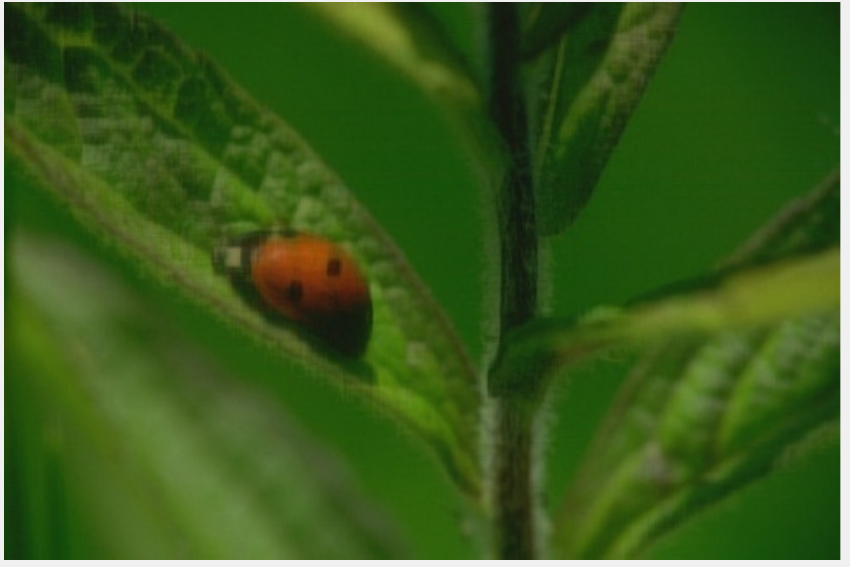}
	\end{subfigure}
	\begin{subfigure}[b]{0.16\textwidth}
		\centering
		\includegraphics[width=\textwidth]{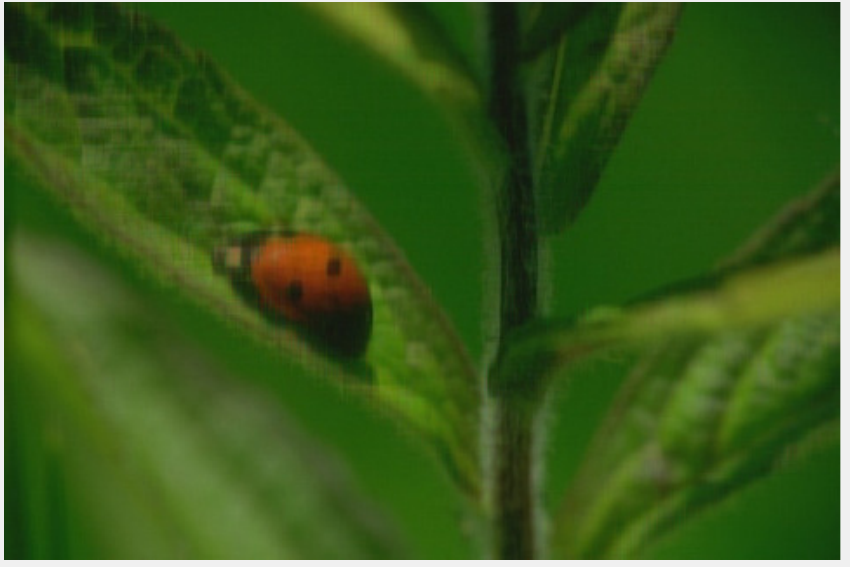}
	\end{subfigure}	
	\begin{subfigure}[b]{0.16\textwidth}
		\centering
		\includegraphics[width=\textwidth]{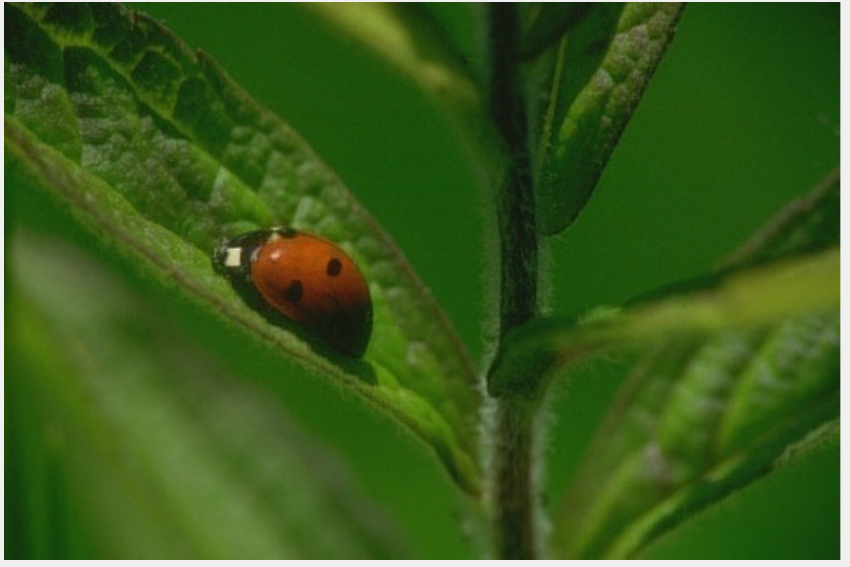}
	\end{subfigure}		
	\\\vspace{0.15em} 
	
	\begin{subfigure}[b]{0.16\textwidth}
		\centering
		\includegraphics[width=\textwidth]{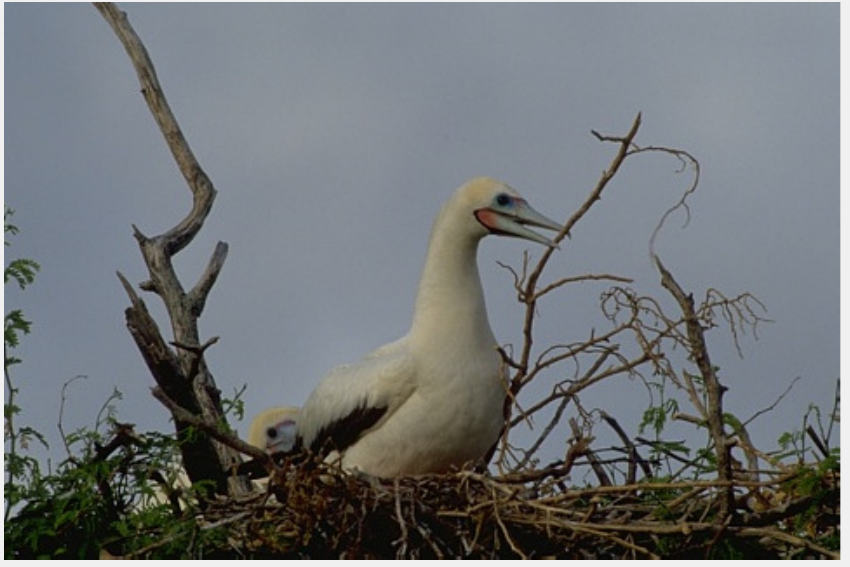}
	\end{subfigure}
	\begin{subfigure}[b]{0.16\textwidth}
		\centering
		\includegraphics[width=\textwidth]{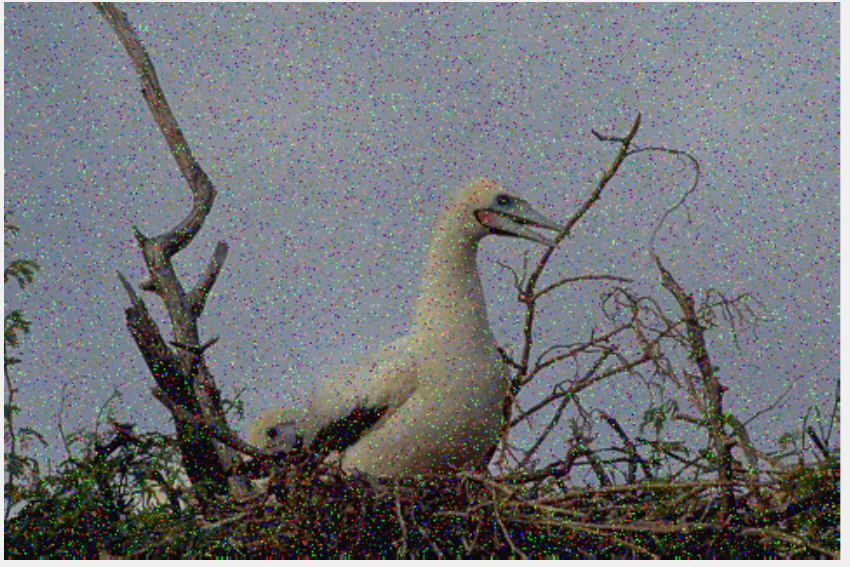}
	\end{subfigure}
	\begin{subfigure}[b]{0.16\textwidth}
		\centering
		\includegraphics[width=\textwidth]{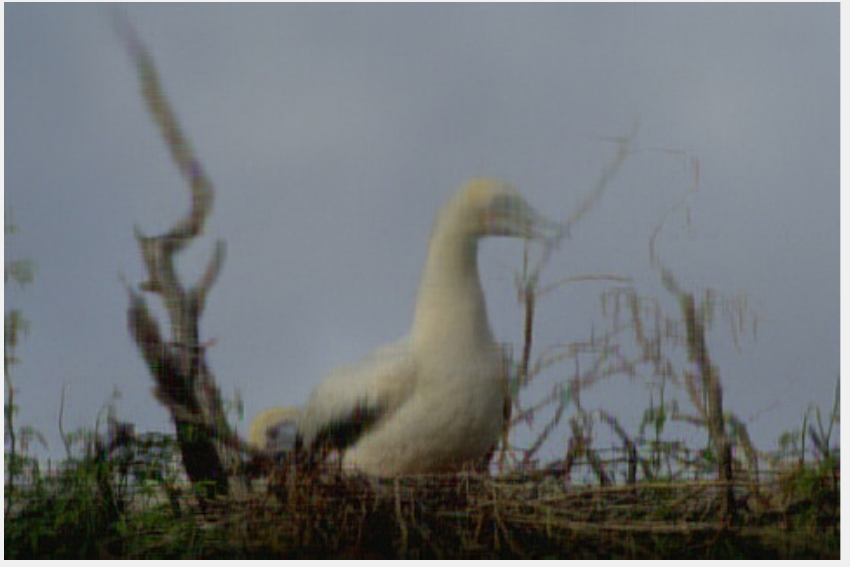}
	\end{subfigure}
	\begin{subfigure}[b]{0.16\textwidth}
		\centering
		\includegraphics[width=\textwidth]{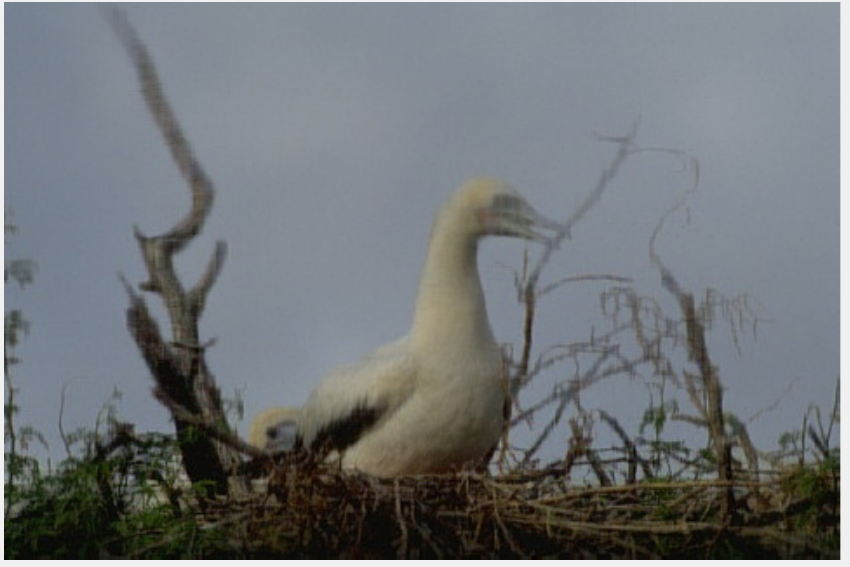}
	\end{subfigure}
	\begin{subfigure}[b]{0.16\textwidth}
		\centering
		\includegraphics[width=\textwidth]{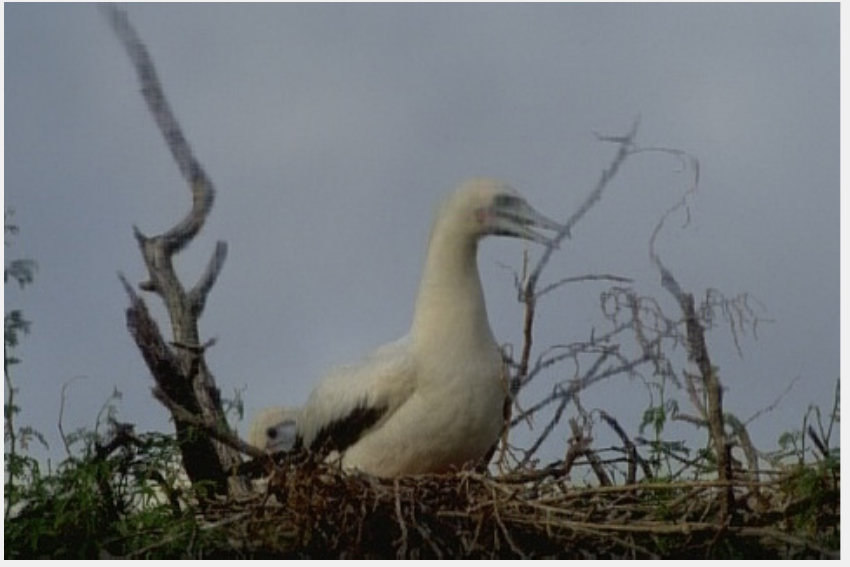}
	\end{subfigure}	
	\begin{subfigure}[b]{0.16\textwidth}
		\centering
		\includegraphics[width=\textwidth]{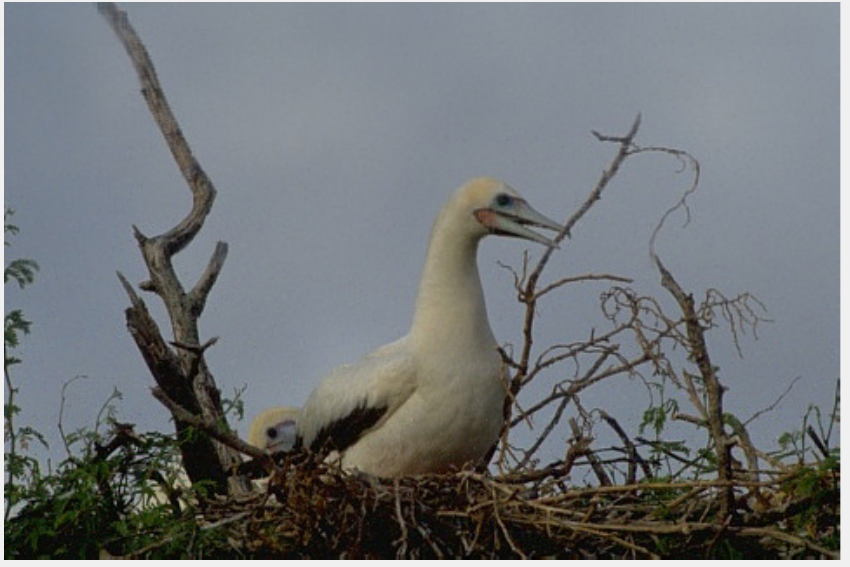}
	\end{subfigure}		
	\\\vspace{0.15em} 
	
	\begin{subfigure}[b]{0.16\textwidth}
		\centering
		\includegraphics[width=\textwidth]{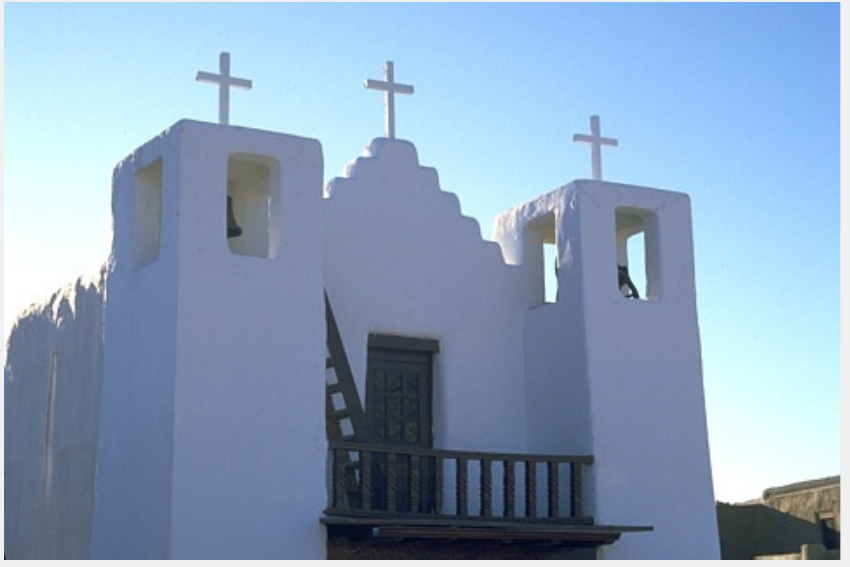}
	\end{subfigure}
	\begin{subfigure}[b]{0.16\textwidth}
		\centering
		\includegraphics[width=\textwidth]{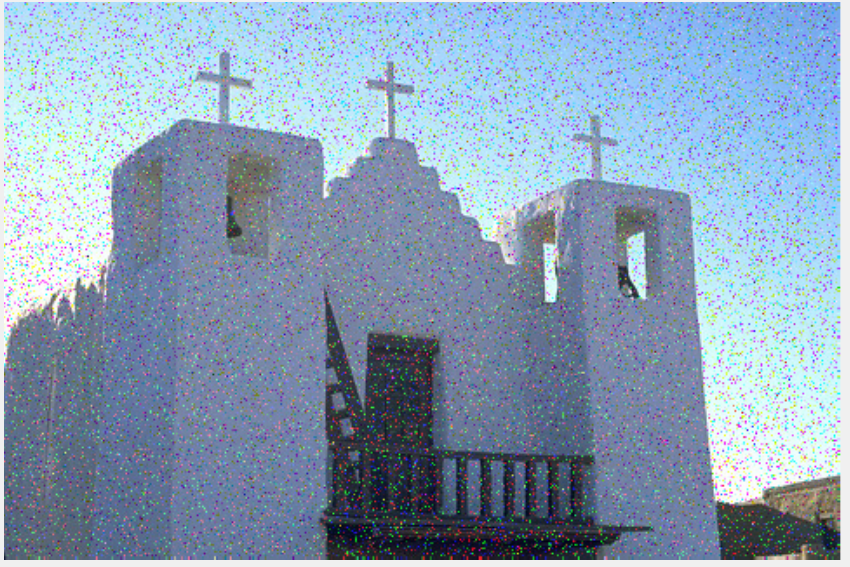}
	\end{subfigure}
	\begin{subfigure}[b]{0.16\textwidth}
		\centering
		\includegraphics[width=\textwidth]{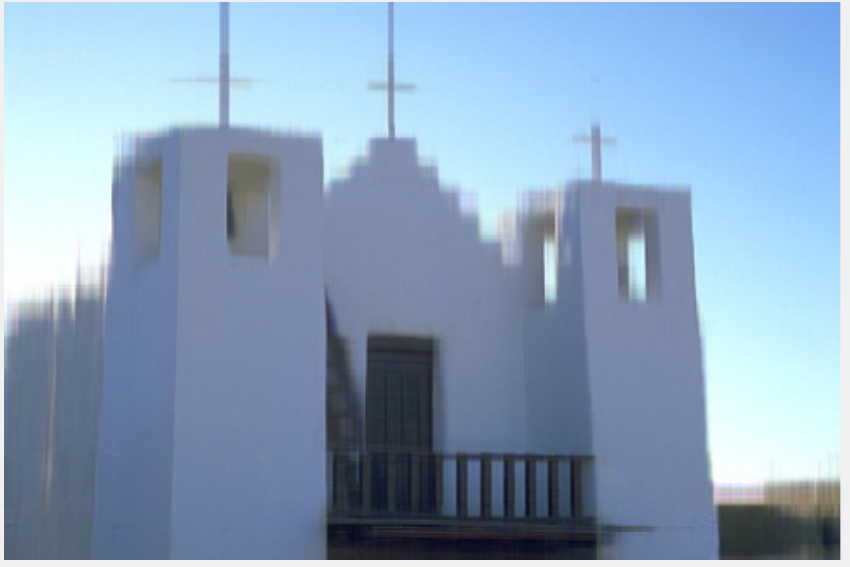}
	\end{subfigure}
	\begin{subfigure}[b]{0.16\textwidth}
		\centering
		\includegraphics[width=\textwidth]{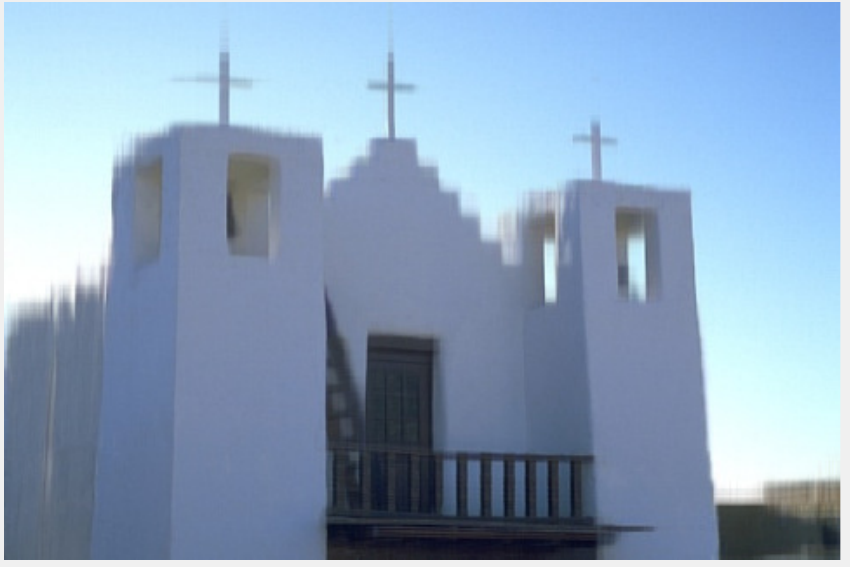}
	\end{subfigure}
	\begin{subfigure}[b]{0.16\textwidth}
		\centering
		\includegraphics[width=\textwidth]{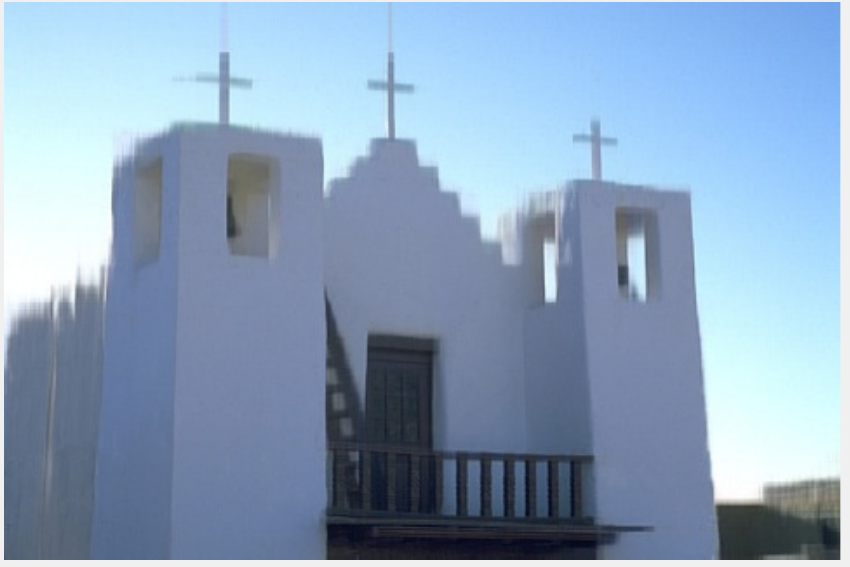}
	\end{subfigure}	
	\begin{subfigure}[b]{0.16\textwidth}
		\centering
		\includegraphics[width=\textwidth]{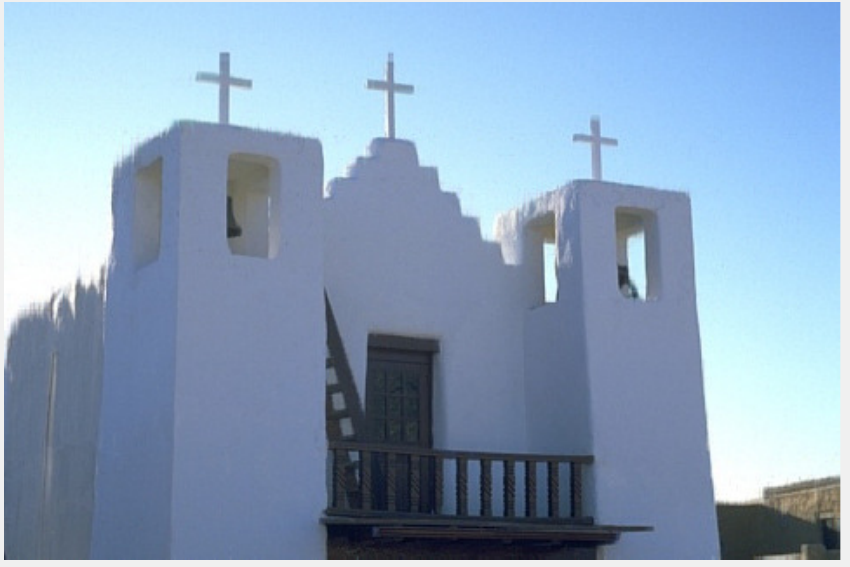}
	\end{subfigure}		
	\\\vspace{0.1em} 
	
	\begin{subfigure}[b]{0.16\textwidth}
		\centering
		\includegraphics[width=\textwidth]{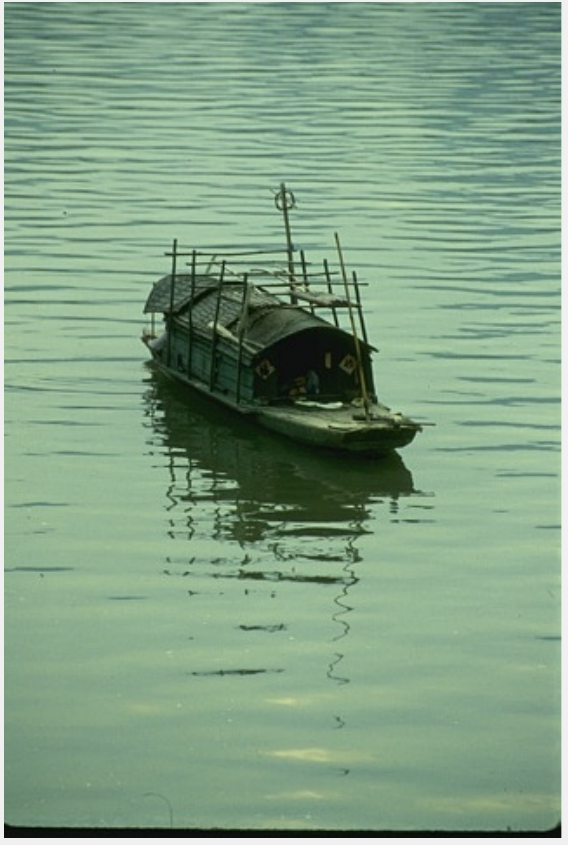}
	\end{subfigure}
	\begin{subfigure}[b]{0.16\textwidth}
		\centering
		\includegraphics[width=\textwidth]{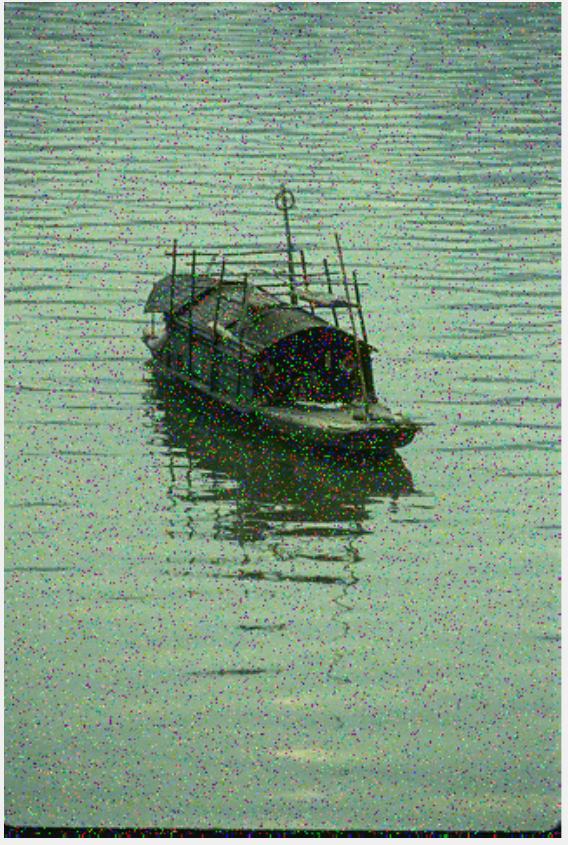}
	\end{subfigure}
	\begin{subfigure}[b]{0.16\textwidth}
		\centering
		\includegraphics[width=\textwidth]{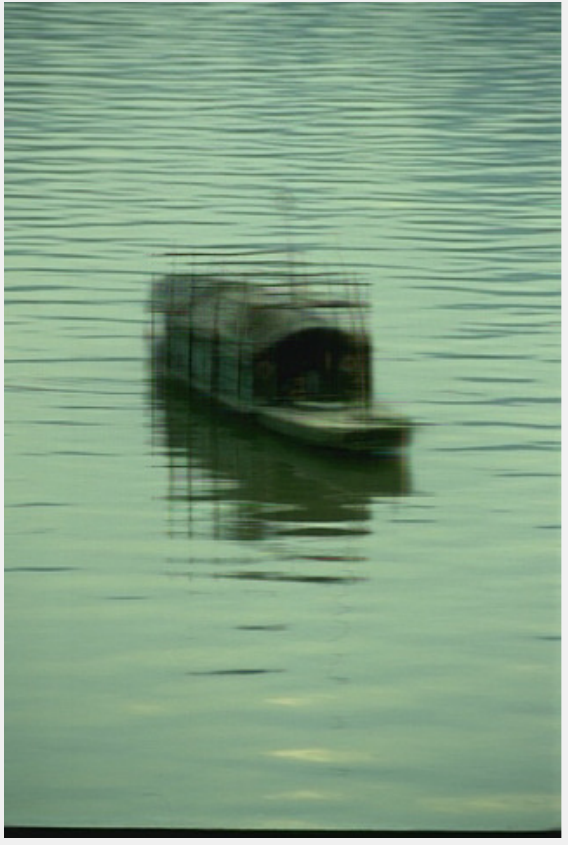}
	\end{subfigure}
	\begin{subfigure}[b]{0.16\textwidth}
		\centering
		\includegraphics[width=\textwidth]{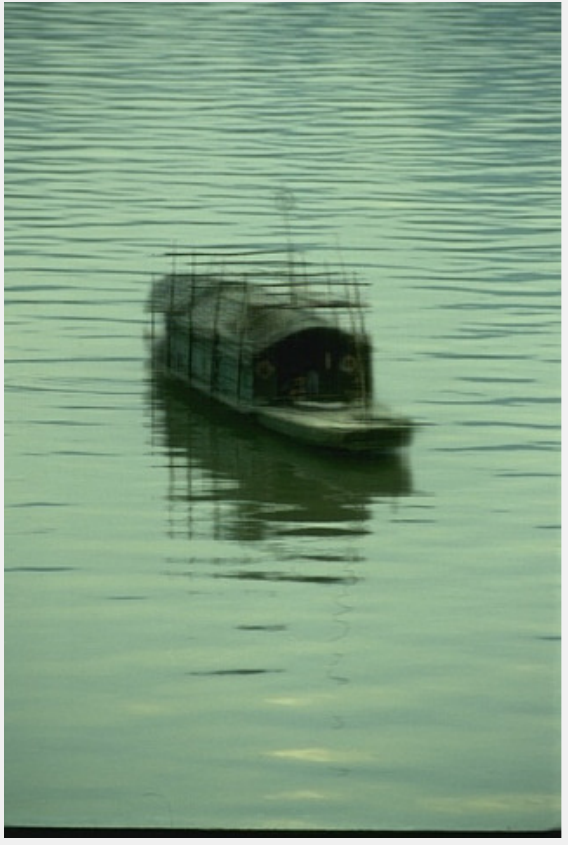}
	\end{subfigure}
	\begin{subfigure}[b]{0.16\textwidth}
		\centering
		\includegraphics[width=\textwidth]{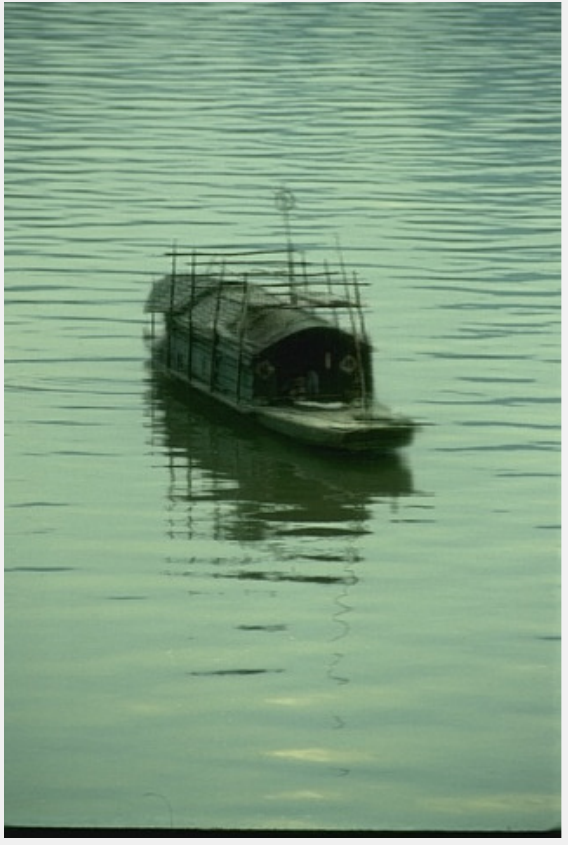}
	\end{subfigure}	
	\begin{subfigure}[b]{0.16\textwidth}
		\centering
		\includegraphics[width=\textwidth]{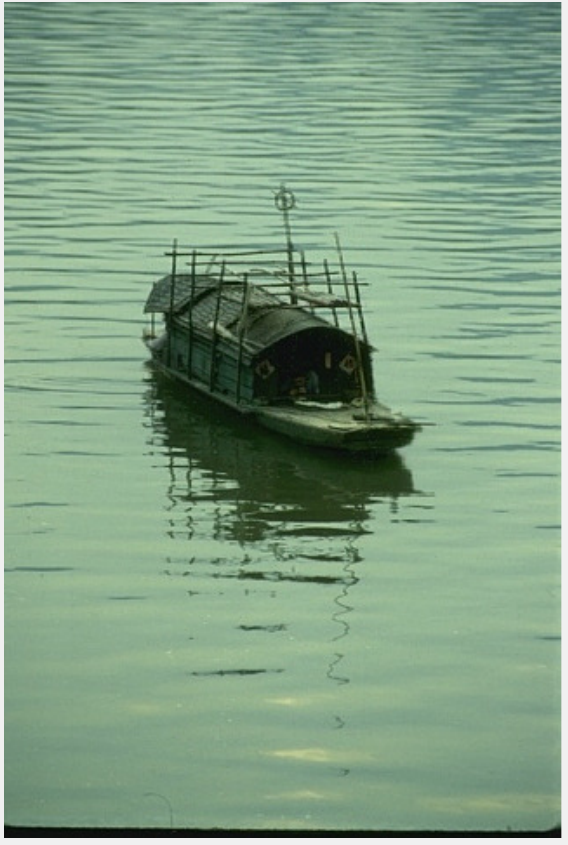}
	\end{subfigure}		
	\\\vspace{0.1em} 
	
	\begin{subfigure}[b]{0.16\textwidth}
		\centering
		\includegraphics[width=\textwidth]{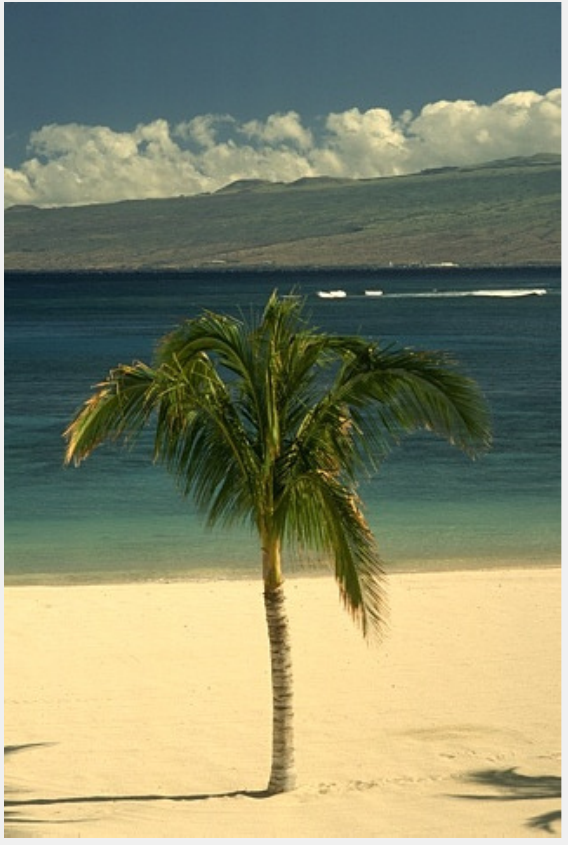}
		\caption{ }
	\end{subfigure}
	\begin{subfigure}[b]{0.16\textwidth}
		\centering
		\includegraphics[width=\textwidth]{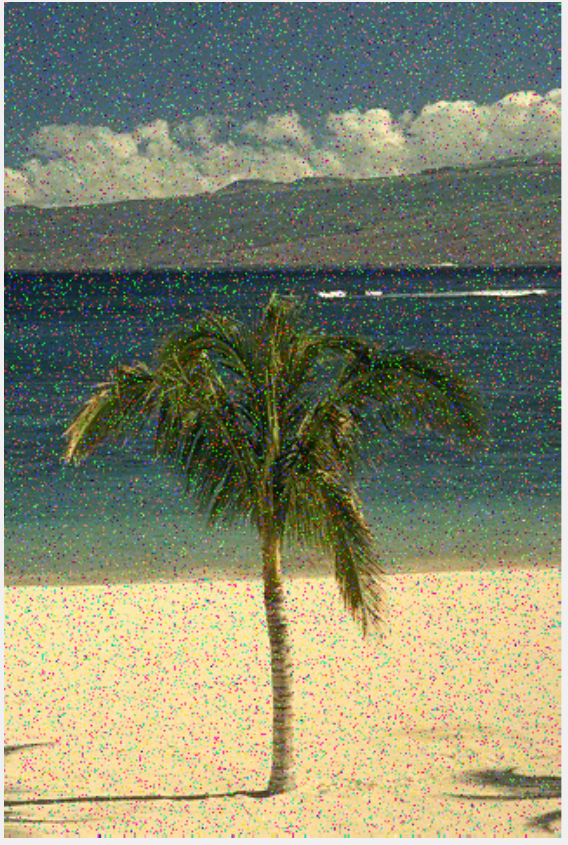}
		\caption{ }
	\end{subfigure}
	\begin{subfigure}[b]{0.16\textwidth}
		\centering
		\includegraphics[width=\textwidth]{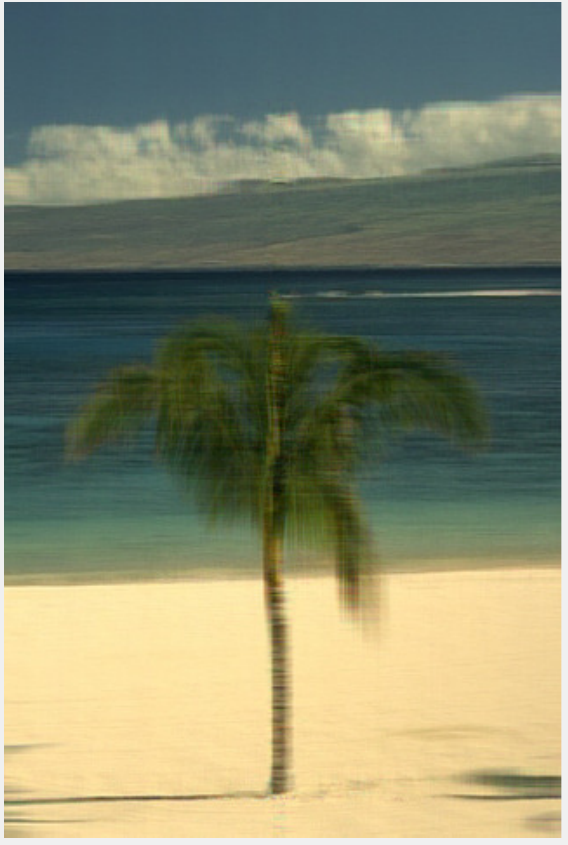}
		\caption{ }
	\end{subfigure}
	\begin{subfigure}[b]{0.16\textwidth}
		\centering
		\includegraphics[width=\textwidth]{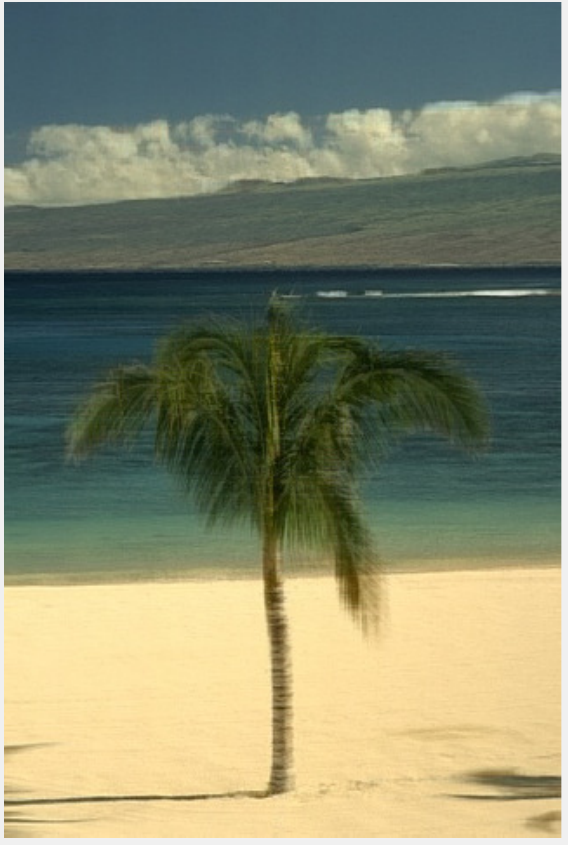}
		\caption{ }
	\end{subfigure}
	\begin{subfigure}[b]{0.16\textwidth}
		\centering
		\includegraphics[width=\textwidth]{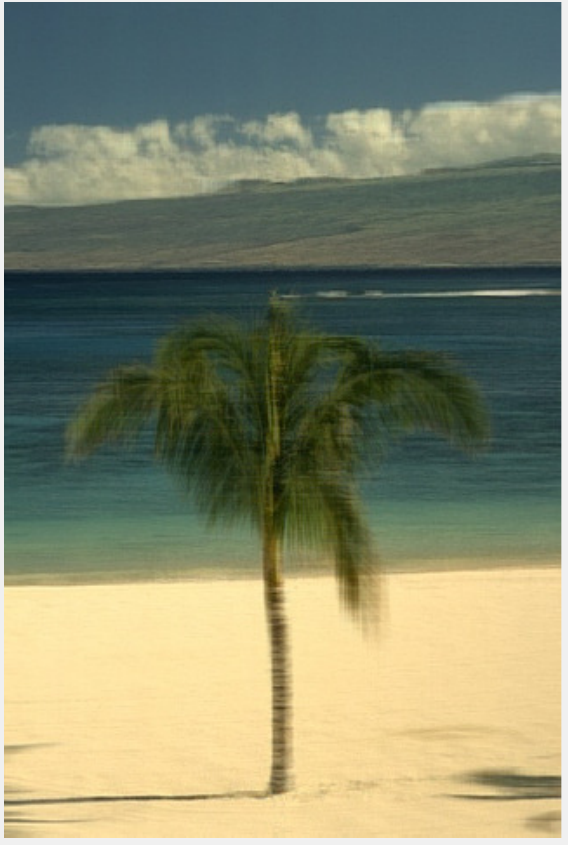}
		\caption{ }
	\end{subfigure}	
	\begin{subfigure}[b]{0.16\textwidth}
		\centering
		\includegraphics[width=\textwidth]{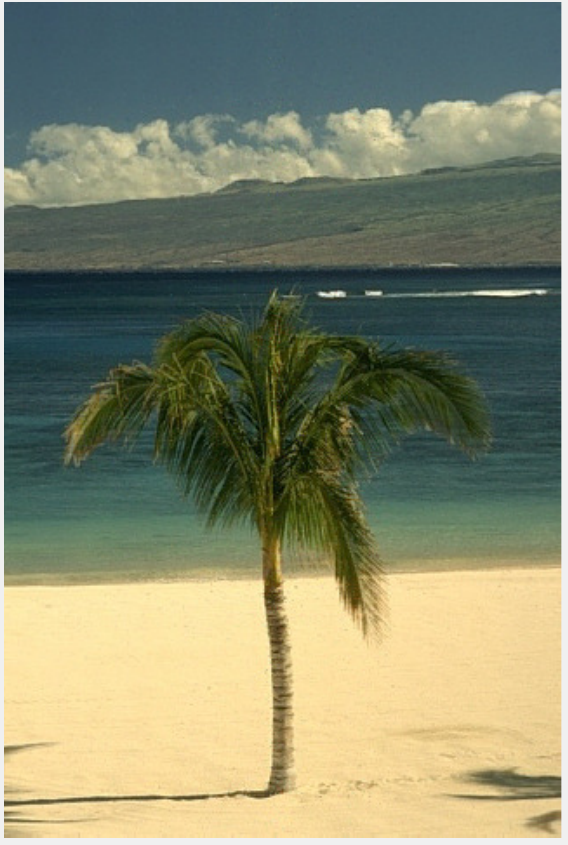}
		\caption{ }
	\end{subfigure}		
	\caption{\small{Performance comparison for image recovery on some sample images. (a) Original image; (b) observed image; (c)-(f) recovered images by RPCA, SNN, TRPCA, and our TRPCA-DCT, respectively. \textbf{Best viewed in $\times 2$ sized color pdf file.}}} \label{fig_imageinpr}
\end{figure*}

\section{Conclusions}\label{sec_con}

Based on the  t-product under invertible linear transforms, we defined the new tensor tubal rank and tensor nuclear norm. We then proposed a new Tensor
Robust Principal Component (TRPCA) model given the transforms satisfying certain conditions. In theory, we proved that, under certain  assumptions, the convex program 
recovers both the low-rank and  sparse components exactly.   Numerical experiments verify our theory and the results on images demonstrate the effectiveness of our model.

This work provides a new direction for the pursuit of low-rank tensor estimation. The tensor rank and tensor nuclear norm  not only depend on the given tensor, but also depend on the used invertible linear transforms. It is obvious that the best transforms will be different for different tasks and different  data. So looking for the optimized transforms is a very interesting future work. Second, though this work focuses on the analysis for 3-way tensors, it may not be difficult to generalize our model and main results to the case of order-$p$ ($p\geq3$) tensors, by using the t-SVD for order-$p$ tensors as in \cite{martin2013order}. Finally, it is always important to apply such a new technique for some other applications.

\section*{Appendix}

\subsection{Proof of Theorem \ref{thmtsvt}}
\begin{proof}
	Let $\Y= \U *_L \SSS *_L\V^*$ be the tensor SVD of $\Y$. By using the definition of tensor nuclear norm and property  (\ref{eq_proFnormNuclear}), problem (\ref{thmeqnsvt}) is equivalent to 
	\begin{align}
	 &\arg\min_{\X} \ \tau \norm{\X}_*+\frac{1}{2}\norm{\X - \Y}_F^2\notag \\
	=&\arg\min_{\X}  \frac{1}{\ell}(\tau\norm{\Xmbar}_*+\frac{1}{2}\norm{\Xmbar-\Ymbar}_F^2) 	\notag \\
	=&\arg\min_{\X}  \ \frac{1}{\ell}\sumi(\tau\norm{\Xmbar^{(i)}}_*+\frac{1}{2}\norm{\Xmbar^{(i)}-\Ymbar^{(i)}}_F^2). \label{eqnthmtsvteqnform}
	\end{align}
	By Theorem 2.1 in \cite{cai2010singular}, we know that the $i$-th frontal slice of $\overline{\mathcal{D}_\tau(\Y)}$ solves the $i$-th subproblem of (\ref{eqnthmtsvteqnform}). Hence, $\mathcal{D}_\tau(\Y)$ solves problem (\ref{thmeqnsvt}).
\end{proof}


{
	\bibliographystyle{IEEEbib}
	\bibliography{TIP_TTRPCA}
}


\vspace{-380pt}

\begin{IEEEbiography}{Canyi Lu} is now a postdoctoral research associate in the Carnegie Mellon University. He received
	his Ph.D. degree from the National University of
	Singapore in 2017. His current research interests include computer vision, machine learning,
	pattern recognition and optimization. He was
	the winner of 2014 Microsoft Research Asia
	Fellowship, 2017 Chinese Government Award
	for Outstanding Self-Financed Students Abroad
	and Best Poster Award in BigDIA 2018. 
\end{IEEEbiography}
\vspace{-350pt}
\begin{IEEEbiography}{Pan Zhou}
received Master Degree in computer
science from Peking University in 2016. Now he
is a Ph.D. candidate at the Department of Electrical and Computer Engineering (ECE), National
University of Singapore, Singapore. His research
interests include computer vision, machine learning,
and optimization. He was
the winner of 2018 Microsoft Research Asia
Fellowship.
\end{IEEEbiography}

\end{document}